\providecommand{\customgenericname}{}
\newcommand{\newcustomtheorem}[2]{%
  \newenvironment{#1}[1]
  {%
   \renewcommand\customgenericname{#2}%
   \renewcommand\theinnercustomgeneric{##1}%
   \innercustomgeneric
  }
  {\endinnercustomgeneric}
}
\newtheorem{theorem}{Theorem}[section]
\newtheorem{lemma}{Lemma}
\newtheorem{remark}{Remark}
\newtheorem{definition}{Definition}
\newtheorem{assumption}{Assumption}
\newtheorem{corollary}[theorem]{Corollary}
\newcommand{\vertiii}[1]{{\left\vert\kern-0.25ex\left\vert\kern-0.25ex\left\vert #1 
    \right\vert\kern-0.25ex\right\vert\kern-0.25ex\right\vert}}
\newcommand{\bE}{\mathbb{E}}
\newcommand{\cF}{\mathcal{F}}
\newcommand{\tr}{T}
\title{Does Momentum Help in Stochastic Optimization? \\ A Sample Complexity Analysis.}
\author{%
  Swetha Ganesh $^{*}$ \\
  Department of Computer Science and Automation \\
  Indian Institute of Science, Bangalore\\
  \texttt{swethaganesh@iisc.ac.in}
  \And
  Rohan Deb \thanks{Equal contribution.} \\
  Department of Computer Science and Automation \\
  Indian Institute of Science, Bangalore\\
  \texttt{rohandeb@iisc.ac.in} \\
  \AND
  Gugan Thoppe \\
  Department of Computer Science and Automation \\
  Indian Institute of Science, Bangalore\\
  \texttt{gthoppe@iisc.ac.in}
  \And
  Amarjit Budhiraja\\
  Department of Statistics and Operations Research \\
  University of North Carolina at Chapel Hill\\
  %Bengaluru, India 560012 \\
  \texttt{budhiraj@email.unc.edu} \\
}
\begin{document}

\maketitle

% \begin{abstract}
% Stochastic heavy ball (SHB) and Nesterov's accelerated stochastic gradient (ASG) are popular acceleration ideas in stochastic optimization. There have been several attempts to understand their perceived benefits, but the complete picture is still unclear. Specifically, the error expression in the stochastic setting has two separate terms: the \emph{bias} and the \emph{variance} but existing works show an improvement of SHB or ASG over SGD either in the bias or the variance and overlook the interplay between them. 
% To capture this interplay, we analyze the sample complexity of both SHB and ASG.
% Our work is the first to show that, from a sample complexity perspective (a more useful metric in practice), SHB or ASG \emph{can never} outperform SGD. We obtain the first lower bound on the sample complexity of SHB and ASG method. We also obtain a matching upper bound and show that both these bounds are of the same order as that of SGD. (\gt{proof technique})? 
% %
% \end{abstract}
\begin{abstract}
Stochastic Heavy Ball (SHB) and Nesterov's Accelerated Stochastic Gradient (ASG) are popular momentum methods in stochastic optimization. While benefits of such acceleration ideas in deterministic settings are well understood, their advantages in stochastic optimization is still unclear. In fact, in some specific instances, it is known that momentum does not help in the sample complexity sense. Our work shows that a similar outcome actually holds for the whole of quadratic optimization. Specifically, we obtain a lower bound on the sample complexity of SHB and ASG for this family and show that the same bound can be achieved by the vanilla SGD. We note that there exist results claiming the superiority of momentum based methods in quadratic optimization, but these are based on  one-sided or flawed analyses.
\end{abstract}

\section{Introduction}
\label{sec_introduction}

In deterministic convex optimization (when one has access to exact gradients), Gradient Descent (GD) is a popular optimization algorithm \cite{cauchy}. However, in practice, exact gradients are not available and one has to rely on noisy observations. This brings forth the idea of Stochastic Gradient Descent (SGD). For GD, two classic momentum ideas, Heavy Ball (HB) \cite{polyak_heavy_ball, polyakbook, qian} and Nesterov's Accelerated Gradient (NAG) \cite{nesterov, nesterovbook, Nesterov05}, are used to speed up its convergence. Naturally, these momentum-based methods and their variants have also gained significant interest in stochastic settings \cite{sutskever13, Nitanda2014, Chonghai}. Our work shows that the stochastic variants of HB and NAG, i.e., the Stochastic Heavy Ball (SHB) and Nesterov's Accelerated Stochastic Gradient (ASG), are not better than the vanilla SGD for the whole of quadratic optimization. Specifically, we show that the sample complexities\footnote{Sample complexity refers to the number of iterations required to reach an $\epsilon$-boundary of the solution.} of SHB, ASG, and SGD are all of the same order. 

More formally, in (deterministic) quadratic optimization\footnote{Throughout this work, we only consider algorithms with constant step-sizes. These are popular in practice and ensure faster convergence.} with condition number $\kappa,$ GD converges to an $\epsilon$-optimal solution in $\mathcal{O}(\kappa\log\frac{1}{\epsilon})$ iterations, while both HB and NAG only need $\mathcal{O}(\sqrt{\kappa}\log\frac{1}{\epsilon})$ steps. A pertinent question then is ``Does one get such advantages even in stochastic settings?" The current literature is, however, divided on whether SHB and ASG are better than SGD. 

Some recent results \cite{Loizou, MJ, Assran, zhu2} 
%look at this question in the constant step-size setup and
claim that these momentum methods are better than SGD in quadratic or least-squares settings. However, \cite{Loizou} needs a strong assumption on noise, which \cite[Section 6]{rahul} claims is information-theoretically impossible even in the simple least squares regression problem. The other results either are based on a one-sided analysis \cite{zhu2} (only considers bias, while ignoring variance) or have a major flaw \cite{MJ, Assran}; see Appendix A.1---A.3 for details.

On the other hand, there are also a few recent negative results on these momentum methods.
%with constant step-sizes
For general convex optimization, \cite{ucla} 
shows that SHB and ASG is equivalent to SGD with a rescaled step-size. 
%bound on the distance between the SHB/ASG and SGD iterates (and, hence, on SHB/ASG's  sample complexity). 
However, this result requires that the stepsize be sufficiently small and the momentum parameter to be away from $1.$ In \cite{rahul, liu}, for one specific instance of the least squares regression with vanishing noise, it is shown that the performance of SHB and ASG cannot be better than that of SGD.  
%However, these results assume vanishing noise, hold only for large enough iterations, and do not exclude the possibility of improved finite-time behavior. %They also do not directly apply to quadratic objective functions, which is commonly used in recent papers in favor of momentum (see references in the previous paragraph). 
Finally, \cite{nqm} considers SHB for quadratic objectives in the noisy setting as our work and provides upper bounds on the rate at which the objective function decreases. They also argue that rescaled SGD performs as well as SHB and demonstrate it empirically but fall short of rigorously coming up with a lower bound that supports their claim.

The current literature can thus be summarized as follows.

\textbf{Research Gap}: Existing works on SHB and ASG fall into two groups: i.) positive -  where the results claim advantages of these methods over SGD and ii.) negative - where the results claim the opposite. Results in the positive group either have a one-sided \cite{zhu2} or a flawed analysis \cite{MJ, Assran}, while the ones in the negative apply to special cases (e.g., \cite{ucla} requires sufficiently small stepsizes and momentum parameters away from $1,$ while \cite{rahul, liu} only apply to one instance of a least squares problem with vanishing noise). 

\textbf{Key Contribution:} Our work belongs to the negative group and extends the claims in \cite{rahul, liu} to more general settings.
Specifically, for all quadratic optimization problems with persistent noise (noise variance is bounded away from zero) and any small $\epsilon > 0,$ we show that number of iterations needed by SHB and ASG to find an $\epsilon$-optimal solution are of the same order as that of SGD. More technically, we obtain a lower bound on sample complexities of SHB and ASG (Theorem \ref{Lower_bound}) and show that these are of the same order as the corresponding upper bound for SGD (Theorem \ref{Theorem_OTS_mom}). We emphasize that our result applies for all positive stepsizes and momentum parameters in the range $[0, 1]$ unlike in \cite{ucla}. Moreover, our proof techniques are also significantly different from those used in existing lower bounds such as \cite{rahul,liu}. This is because, under persistent noise, the expected error would contain an additional term that cannot be accounted for from their analyses (See Remark \ref{R3}).

\section{Main Results}
\label{sec:main}
We state our main results here that provide lower and upper bounds on the sample complexities of SHB and ASG. We use these bounds along with those of SGD to show that all these methods need a similar effort to find an $\epsilon$-optimal solution. We emphasize that our results apply to the whole family of quadratic optimization. 

To provide a unified analysis, we look at a generic update rule that includes as special cases SHB, ASG, and SGD. In particular, we consider the linear stochastic approximation with momentum (LSA-M) iterate given by
\begin{align}
    \label{sgd-m}
    x_{n+1} = {} & x_{n} + \alpha(b - Ax_{n} + M_{n+1}) \nonumber\\
    & +\eta(I-\alpha\beta A)(x_{n} - x_{n-1}).\\
    \label{sgd-m-2}
    = {} & x_{n} + \alpha(b - A(x_{n}+\eta \beta (x_{n} -x_{n-1})) + M_{n+1})\nonumber\\ 
    & + \eta (x_{n} - x_{n-1}),
\end{align}
where $b\in\mathbb{R}^d, A \in\mathbb{R}^{d\times d},$ and $M_{n+1} \in \mathbb{R}^d$ is noise. Note that we do not presume $A$ is symmetric and this is what makes the above update a stochastic approximation. Also, when $A$ is symmetric, LSA-M subsumes as special cases SGD (let $\eta = 0$ in \eqref{sgd-m}), SHB (let $\beta = 0$ in \eqref{sgd-m}), and ASG (let $\beta=1$ in \eqref{sgd-m-2}). 

% If $A$ is symmetric, then $Ax-b$ corresponds to the gradient of the quadratic objective $f(x) = \frac{1}{2}x^TAx - b^Tx + c$, where $c \in \mathbb{R}$.
% Note 

%
With regards to $A$ and the noise sequence $(M_{n}),$ we make the following assumptions.
\begin{assumption}
\label{A1}
(Driving matrix feature)
$A$ is diagonalizable and all its eigen values \(\{\lambda_{i}(A)\}_{i=1}^{d}\) are real and positive.
\end{assumption}

Assumption~\ref{A1} holds trivially when $A$ is a symmetric positive definite matrix. Consequently, our results apply to all members of the quadratic optimization family.

Also, under the above assumption, one would expect the iterates in \eqref{sgd-m} to go to a neighborhood of $x^* := A^{-1} b.$

We next state two assumptions for $(M_n):$ the first is used in Theorem~\ref{Lower_bound}, while the other is used in Theorem~\ref{Theorem_OTS_mom} and Corollary~\ref{Upper_bound_cor}.

\edef\oldassumption{\the\numexpr\value{assumption}+1}

\setcounter{assumption}{0}
\renewcommand{\theassumption}{\oldassumption.\alph{assumption}}

\begin{assumption}
\label{A2}
(Noise attributes for Theorem~\ref{Lower_bound}) $(M_{n})$ is a martingale difference sequence w.r.t. the filtration $(\mathcal{F}_{n})$, where $\mathcal{F}_{n} = \sigma(x_{m},M_{m};m\leq n)$. Further, $\exists K > 0$ such that $\mathbb{E}[M_{n+1} M_{n+1}^T|\mathcal{F}_{n}]\succeq K I_d$ a.s. for all $n \geq 0.$
\end{assumption}

The notation $A \succeq B$ for $A,B \in \mathbb{R}^d$ is used above to imply that $A-B$ is positive semi-definite. Also, the symbol $I_d$ refers to the identity matrix. 

\begin{assumption}
\label{A3}
(Noise attributes for Theorem~\ref{Theorem_OTS_mom})
$(M_n)$ is a martingale difference sequence w.r.t the filtration $(\mathcal{F}_{n})$, where $\mathcal{F}_{n} = \sigma(x_{m},M_{m};m\leq n)$. Further, $\exists K \geq 0$ such that  $\mathbb{E}[\|M_{n+1}\|^{2}|\mathcal{F}_{n}] \leq K(1+\|x_{n}-x^{*}\|^2)$ a.s. for all $n \geq 0.$
\end{assumption}

\let\theassumption\origtheassumption

Assumptions~\ref{A2} and \ref{A3} are standard \cite{mandt, jas, cheng, Borkar_Book}. The first of these holds if and only if all the eigenvalues of $\bE[M_{n + 1} M_{n + 1}^\tr |\cF_n]$ are bounded from below by $K,$ i.e., noise is persistent throughout (or non-vanishing) in all directions. On the other hand, Assumption~\ref{A3} requires that the trace of $\bE[M_{n + 1} M_{n + 1}^\tr |\cF_n]$ be bounded from above. This bound can scale with $\|x_n - x^*\|$ and need not vanish near $x^*.$

Next, we define sample complexity which quantifies the effort required by LSA-M to obtain an $\epsilon$-close solution to $x^*.$

\begin{definition}
(Sample Complexity). The sample complexity of \eqref{sgd-m} is the minimum number of iterations $n_0$ such that the expected error $\mathbb{E}[\|x_n-x^*\|^2] \leq \epsilon$  $ \forall n \geq n_0$. 
\end{definition}

%Such assumptions on the noise is true in additive noise models (see \cite{sham,ucla}).
\begin{table*}[t]
\begin{center}
\begin{tabular}{ |c|c|c|c| }
\hline
 Method &$\beta$ & $\eta$ & $\alpha$ \\ \hline
\multirow{3}{*}{SGD}
 & & & \\
 & - & 0 & $\min\Big(\frac{\lambda_{min}(A)}{\frac{3}{4}\lambda_{min}(A)^{2} + C^{2}K}, \frac{\epsilon\lambda_{min}(A)}{4C^2K}, \frac{2}{\lambda_{max}(A)+\lambda_{min}(A)} \Big) $ \\
 & & & \\

 \hline
\multirow{1}{*}{SHB}
 & 0 & $ \left(1-\frac{\sqrt{\alpha\lambda_{min}(A)}}{2}\right)^{2} $ &$ \min\Big((\frac{\lambda_{\min}(A)^{\frac{3}{2}}}{\frac{3}{8}\lambda_{\min}(A)^{2} + 25C^2K})^{2}, (\frac{\epsilon(\lambda_{\min}(A))^{3/2}}{200 C^2 K})^2,$ \\ 
 & &  &  $(\frac{2}{\sqrt{\lambda_{min}(A)}+\sqrt{\lambda_{max}(A)}})^2\Big) $ \\
 & & & \\
  
 \hline
 \multirow{1}{*}{ASG}
 & 1 & $\frac{ \left(1-\frac{\sqrt{\alpha\lambda_{min}(A)}}{2}\right)^{2}}{(1-\alpha\lambda_{min}(A))}$ & $\min\Big((\frac{\lambda_{\min}(A)^{\frac{3}{2}}}{\frac{3}{8}\lambda_{\min}(A)^{2} + 25C^2K})^{2}, (\frac{\epsilon(\lambda_{\min}(A))^{3/2}}{200 C^2 K})^2,\frac{1}{\lambda_{max}(A)}\Big)$\\
 & &  &   \\
  
 \hline
\end{tabular}
\\
\caption{Parameter choices for Theorem
\ref{Theorem_OTS_mom}. Here $C=1$ when the matrix $A$ is symmetric and $C = \frac{\sqrt{d}}{\sigma_{\min}(S)\sigma_{\min}(S^{-1})}$ when $A$ is not symmetric, where $\sigma_{min}(\cdot)$ denotes the smallest singular value and $S$ is the matrix that diagonalizes $A$, i.e., $S^{-1}AS = D$, a diagonal matrix. When $A$ is symmetric, indeed the three parameter choices correspond to SGD, SHB and ASG. However, with a slight abuse of notation, we stick to the same naming convention even when the driving matrix $A$ is not symmetric.}
\label{table}
\end{center}
\end{table*}

To enable easy comparison between different algorithms, we shall look at the order of their sample complexities. Towards that, we shall use the notation $n_0 \in \Theta(t)$ to imply that there exist constants $c_1$ and $c_2$ (independent of $t$) such that $c_1 t \leq n_0 \leq c_2 t $. The notation $\tilde{\Theta}(t)$ has a similar meaning but hides the dependence on the logarithmic terms as well.

\begin{theorem}[Lower bound on sample complexity]
    \label{Lower_bound}
    Consider the LSA-M iterate in \eqref{sgd-m} and suppose Assumptions \ref{A1} and \ref{A2} hold. Let $\epsilon > 0$ be small enough and $\lambda_{min}(A) = \min_{i}\lambda_{i}(A)$. Then, for any choice of $\alpha > 0,$ $\beta \in [0,1],$ and $\eta \in [0,1]$, the expected error $\mathbb{E}[\|x_{n_{0}} - x^{*}\|^2]\geq\epsilon$ for $n_0 = \frac{K}{64\epsilon \lambda_{min}(A)^2}\log(\frac{\|x_0-x^*\|^2}{\epsilon}) \in \tilde{\Theta}\left(\frac{K}{\epsilon\lambda_{min}(A)^2}\right).$
\end{theorem}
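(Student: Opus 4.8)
The plan is to pass to the error process $e_n := x_n - x^*$ and exploit the persistent-noise floor from Assumption~\ref{A2} in the slowest eigendirection of $A$. Since $x^* = A^{-1}b$ gives $b - Ax_n = -Ae_n$, the recursion \eqref{sgd-m} becomes the homogeneous-plus-noise relation $e_{n+1} = e_n - \alpha A e_n + \eta(I-\alpha\beta A)(e_n - e_{n-1}) + \alpha M_{n+1}$. I would lift this second-order recursion to a first-order system in $\mathbb{R}^{2d}$ via $z_n = (e_n^\top, e_{n-1}^\top)^\top$, so that $z_{n+1} = B z_n + \alpha \tilde M_{n+1}$ with $\tilde M_{n+1} = (M_{n+1}^\top, 0)^\top$ and $B$ the $2\times2$-block companion matrix built from $\alpha,\eta,\beta,A$. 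Unrolling yields $e_n = PB^n z_0 + \alpha\sum_{k=0}^{n-1} PB^{\,n-1-k}\tilde M_{k+1}$ with $P = (I_d\ 0)$.

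Because $(M_n)$ is a martingale difference sequence, the cross terms vanish and $\mathbb{E}\|e_n\|^2$ splits into a nonnegative bias term $\|PB^n z_0\|^2$ and a nonnegative variance term. Using $\mathbb{E}[M_{n+1}M_{n+1}^\top\mid\mathcal{F}_n]\succeq K I_d$ together with $\mathrm{tr}(X\,\mathbb{E}[MM^\top])\geq K\,\mathrm{tr}(X)$ for any $X\succeq 0$, the variance term is bounded below by $\alpha^2 K\sum_{j=0}^{n-1}\|(B^j)_{11}\|_F^2$, where $(B^j)_{11}$ is the top-left $d\times d$ block. Diagonalizing $A = SDS^{-1}$ (Assumption~\ref{A1}) block-decouples $B$ into $d$ scalar $2\times2$ companion matrices $B_{\lambda_i}$, and I would retain only the slowest mode $\lambda := \lambda_{\min}(A)$; in the non-symmetric case the change of basis costs a conditioning factor that I would carry through $\sigma_{\min}(S)$.

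The analytic heart is the $2\times2$ matrix $B_\lambda$ with eigenvalues $\mu_1,\mu_2$. The key identity is $(1-\mu_1)(1-\mu_2) = 1 - (\mu_1+\mu_2) + \mu_1\mu_2 = \alpha\lambda$, which — for real or complex roots and every $\eta\in[0,1],\beta\in[0,1]$ — forces the spectral radius $\rho = \max_i|\mu_i|$ to obey $\alpha\lambda \geq (1-\rho)^2$, i.e.\ $1-\rho\leq\sqrt{\alpha\lambda}$: momentum cannot push the per-step decay past the $\sqrt{\alpha\lambda}$ barrier. Independently, solving the stationary Lyapunov equation for the scalar recursion gives a variance floor $\gtrsim \alpha K/(\lambda(1-\rho))$. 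These two facts pull in opposite directions and are what I would play against each other. If $\alpha$ is large enough that the noise accumulated by $n_0$ already exceeds $\epsilon$, we are done. Otherwise the floor being below $\epsilon$ gives $\alpha < 8\epsilon\lambda(1-\rho)/K$, which combined with $\alpha \geq (1-\rho)^2/\lambda$ forces $1-\rho < 8\epsilon\lambda^2/K$; hence the slow mode barely contracts and the bias $\rho^{2n_0}$ (carried by the slow-direction component of $x_0-x^*$) stays above $\epsilon$ precisely at $n_0 = \frac{K}{64\epsilon\lambda^2}\log(\|x_0-x^*\|^2/\epsilon)$, which is where the logarithm and the initial error enter. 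The constant $1/64$ is chosen so the two regimes overlap and cover every $(\alpha,\eta,\beta)$; large/unstable $\alpha$ is immediate since the iterates fail to stay bounded.

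I expect the main obstacle to be making this dichotomy uniform and rigorous rather than heuristic: one must control the \emph{finite-time} variance sum and the bias simultaneously over the entire parameter box, including the underdamped complex-eigenvalue regime where the slow mode oscillates and the boundary $\eta\to1$ where the variance floor blows up, and one must verify that the lower bound survives the similarity transform when $A$ is not symmetric. Threading clean constants through the identity $(1-\mu_1)(1-\mu_2)=\alpha\lambda$ and the stationary-variance computation — so that the \emph{same} $n_0\in\tilde\Theta(K/(\epsilon\lambda^2))$ emerges in both regimes regardless of how momentum is tuned — is the crux, and it is exactly what formalizes the claim that momentum's faster bias decay is always paid for by a larger persistent-noise floor.
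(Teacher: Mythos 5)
Your proposal follows essentially the same skeleton as the paper's proof: lift to the $2d$-dimensional companion system, split $\mathbb{E}\|e_n\|^2$ into bias plus variance via the martingale-difference property, block-diagonalize and retain the slowest $2\times 2$ mode, and play the variance floor against the spectral radius through the identity $(1-\mu_+)(1-\mu_-)=\alpha\lambda$ (this identity, together with the bound $h(\eta,\alpha\lambda)(1-\rho(P))\leq 8$, is exactly the content of the paper's Lemmas 1--3). Your observation that the identity holds uniformly in $\beta$ is a mild simplification over the paper, which instead reduces general $\beta$ to $\beta=0$ with $\eta'=\eta(1-\alpha\lambda\beta)$ and treats $\alpha\lambda>1$ separately; note that for the large-$\alpha$ case the clean argument is not instability of the iterates (for $\eta=0$ and $\alpha\lambda\in(1,2)$ they remain stable) but simply that the single $j=0$ term of the variance sum already equals $\alpha^2K>K/\lambda^2\geq\epsilon$.

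There is, however, one step that does not follow as written, and it comes from running the dichotomy in the opposite direction from the paper. You split on whether the \emph{finite-time} accumulated variance at $n_0$ exceeds $\epsilon$, and in the second branch you infer that ``the floor'' (the stationary value $\alpha^2K\sum_{j=0}^{\infty}\|P^je_1\|^2\gtrsim \alpha K/(\lambda(1-\rho))$) is below $\epsilon$. That inference is false in general: the finite sum is always at most the stationary value, so its being small says nothing about the stationary floor when $\rho$ is so close to $1$ that the sum has not saturated by time $n_0$. The paper avoids this by splitting on the \emph{bias} instead: assuming $\|P^{n_0}\tilde X_0\|^2<\epsilon$ simultaneously yields $1-\rho(P)\gtrsim\frac{1}{n_0}\log(1/\epsilon)$ (Lemma 3, which feeds the variance lower bound) and $\|P^{n_0}e_1\|^2<\epsilon$ (which is what Lemma 1 uses to show the finite sum is at least half the stationary one). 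Your version is patchable --- in the unsaturated regime $\rho^{2n_0}$ is bounded below by a constant and the bias term carries the day --- but that third sub-case must be argued explicitly; as stated, the regime ``finite variance $<\epsilon$ but stationary floor $\geq\epsilon$'' is uncovered. You correctly flag finite-time control as the main obstacle, but the fix is precisely to condition on the bias rather than on the variance.
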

\begin{remark}
    \label{R1}
    As stated below \eqref{sgd-m-2}, LSA-M includes SHB and Nesterov's ASG method as special cases and, hence, the above result directly applies to them. In fact, this is the first lower bound on SHB and ASG's sample complexities in quadratic optimization with persistent noise.
\end{remark}
\begin{remark}
\label{R2}
    The value of $n_0$ above is obtained by optimizing over all possible values of $\alpha, \beta, \eta$ and hence does not depend on their specific choices. This implies that there is no ``smart'' combination of $\alpha,\beta$ and $\eta$ that will give a better sample complexity (for $\epsilon$ small enough).
\end{remark}
% \begin{remark}
% \label{R2}
%     A lower bound on the sample complexity for SHB and ASG were previously obtained in \cite{rahul} and \cite{liu} respectively for specific instances in linear regression. However, the techniques employed in these works cannot be used when there is persistent noise. Concretely, their iterates are of the form: $\Phi_{t+1}=\mathcal{B} \Phi_{t}$, where $\Phi_{t}$ is the vectorised covariance matrix (cf. Appendix A of \cite{rahul} and Appendix C of \cite{liu}). They study the spectrum of $\mathcal{B}$ to conclude their result. However, under persistent noise, there is an additional term (cf. \eqref{general-lower-bound}) which cannot be subsumed by $\mathcal{B}$. Our proof analyzes the two terms and their effect on each other and show that at least one of them remains larger than $\epsilon$ for the choice of $n$ given in Theorem~\ref{Lower_bound}. 
% \end{remark}
\begin{remark}
\label{R3}
    The lower bounds in \cite{rahul} and \cite{liu} are obtained by viewing the expected error in SHB and ASG iterates for least squares as update rules of the form $z_{n + 1} = P z_n$ for some matrix $P$ \cite[Appendix~A, p~16]{rahul} and \cite[Appendix C, p~12]{liu}). In particular, they obtain bounds on the eigenvalues of $P$ to get the desired claim. In contrast, the error relations for SHB and ASG methods in our setup (quadratic optimization with persistent noise) have the form $z_{n + 1} = P z_n + \alpha W_n$ for some matrix $P$ and vector $W_n$ (cf. \ref{recursion}). This forces us to develop a new proof technique that jointly looks at both these terms and show that at least one of them remains larger than $\epsilon$ for the choice of $n_0$ given in Theorem~\ref{Lower_bound}. 
\end{remark}

%
% However, the techniques employed in these works cannot be extended to handle the setting in our work. Specifically, their error analysis writes the covariance matrix of the iterate $x_n$ recursively (cf. Appendix A of \cite{rahul} and Appendix C of \cite{liu}) and analyzes the matrix multiplying the vectorized covariance matrix. In our setting, we end up with an additional term and therefore one needs to pay attention to the interplay between them. In fact we show that for $n$ as given in Theorem \ref{Lower_bound}, the max of the two terms remains larger than $\epsilon$.

We next state our upper bound on the sample complexity in Theorem~\ref{Theorem_OTS_mom} and Corollary~\ref{Upper_bound_cor}. Similar bounds already exist in literature when $A$ is assumed to be symmetric and the noise is assumed to be iid with bounded variance (\cite{zhu2,nqm}). Here, we show that a similar upper bound holds under more general settings: i.) $A$ is not symmetric but satisfies Assumption~\ref{A1}, and ii.) the noise is a martingale difference sequence satisfying Assumption~\ref{A3}.

% We point out that Assumptions \ref{A2} and \ref{A3} both hold with the same $K$ when $\mathbb{E}[M_{n+1} M_{n+1}^T|\mathcal{F}_{n}] = KI_d$.
\begin{theorem}
\label{Theorem_OTS_mom} Consider the LSA-M iterate and let Assumptions~\ref{A1} and \ref{A3} hold. 
Then, $\forall \epsilon > 0$, there exists a choice of
$\alpha$, $\beta$ and $\eta$ in LSA-M (see Table~\ref{table} for exact values) such that the expected error $\mathbb{E}[\| x_{n}-x^*\|^2] \leq \epsilon$, $\forall n > n_{0}$, where,

\begin{enumerate}
    \item[(i)] $n_{0} \in \Tilde{\Theta}(\frac{1}{{\alpha\lambda_{min}(A)}}),$ when $\eta = 0$ 
    \item[(ii)] $n_{0} \in \Tilde{\Theta}(\frac{1}{\sqrt{\alpha\lambda_{min}(A)}}),$ when $\eta > 0$.
\end{enumerate}
\end{theorem}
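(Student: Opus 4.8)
The plan is to convert the second-order recursion \eqref{sgd-m} into a first-order one on an augmented state, identify the spectral radius of the resulting transition matrix under the parameter choices of Table~\ref{table}, and then push this contraction through a second-moment (Lyapunov) argument while controlling the state-dependent noise allowed by Assumption~\ref{A3}. First I would pass to the error $e_n := x_n - x^*$. Since $Ax^* = b$, \eqref{sgd-m} becomes $e_{n+1} = (I-\alpha A)e_n + \eta(I-\alpha\beta A)(e_n - e_{n-1}) + \alpha M_{n+1}$. Stacking $z_n := (e_n^\tr, e_{n-1}^\tr)^\tr$ yields the linear recursion $z_{n+1} = P z_n + \alpha \widetilde{M}_{n+1}$, where $P$ is the $2d\times 2d$ companion-type matrix assembled from $(I-\alpha A)$ and $\eta(I-\alpha\beta A)$, and $\widetilde{M}_{n+1} = (M_{n+1}^\tr, 0)^\tr$.

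Next I would diagonalize $A = S D S^{-1}$, which is possible by Assumption~\ref{A1} and has all eigenvalues $\lambda_i(A)$ real and positive. A conformal change of variables block-diagonalizes $P$ into $d$ scalar $2\times 2$ blocks $P_i$, one per eigenvalue $\lambda_i(A)$; the constant $C = \sqrt{d}/(\sigma_{\min}(S)\sigma_{\min}(S^{-1}))$ appearing in Table~\ref{table} is precisely the norm distortion incurred when translating between the Euclidean norm and the $S$-adapted norm in the non-symmetric case (and $C=1$ when $A$ is symmetric, so that $S$ is orthogonal). Each $P_i$ has a quadratic characteristic polynomial, and the parameter choices in Table~\ref{table} are tuned so that: for $\eta=0$ one gets $\rho(P_i)=|1-\alpha\lambda_i(A)|\le 1-\alpha\lambda_{\min}(A)$; whereas for $\eta>0$ the discriminant is nonpositive across all $\lambda_i(A)$ and $\rho(P_i)=\sqrt{\eta}= 1-\tfrac12\sqrt{\alpha\lambda_{\min}(A)}$ (up to the $(1-\alpha\lambda_{\min}(A))$ rescaling used for ASG). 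I would then build a single symmetric positive-definite $Q$ with $P^\tr Q P \preceq \rho^2 Q$ for $\rho := \max_i \rho(P_i)$, absorbing any polynomial-in-$n$ factor from near-critical (Jordan) blocks into the logarithmic part of $n_0$; this is exactly where $\tilde\Theta$, rather than $\Theta$, enters.

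Finally, setting $V_n := \mathbb{E}[z_n^\tr Q z_n]$, I would take conditional expectations. The martingale-difference property in Assumption~\ref{A3} kills the cross term, and the bound $\mathbb{E}[\|M_{n+1}\|^2 \mid \mathcal{F}_n]\le K(1+\|e_n\|^2)$ together with $\|e_n\|^2 \lesssim V_n$ produces a scalar recursion of the form $V_{n+1}\le (\rho^2 + c\,\alpha^2 C^2 K)\,V_n + \alpha^2 C^2 K$. The first two step-size constraints in Table~\ref{table} guarantee both that $\gamma := \rho^2 + c\,\alpha^2 C^2 K < 1$ and that the steady-state level $\alpha^2 C^2 K/(1-\gamma) \le \epsilon/2$; solving $V_n \le \gamma^n V_0 + \epsilon/2$ then gives $n_0 = \Theta\!\big((1-\gamma)^{-1}\log(\|x_0-x^*\|^2/\epsilon)\big)$. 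Substituting $1-\gamma \asymp \alpha\lambda_{\min}(A)$ when $\eta=0$ and $1-\gamma \asymp \sqrt{\alpha\lambda_{\min}(A)}$ when $\eta>0$ yields the two claimed orders in parts (i) and (ii).

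The hard part, I expect, is that Assumption~\ref{A3} permits the noise to scale with $\|e_n\|^2$, so the Lyapunov recursion is \emph{not} merely driven by an additive constant: the state feeds back into the driving term through the coefficient $c\,\alpha^2 C^2 K$. Closing the recursion therefore demands that $\alpha$ be small enough that this feedback does not overwhelm the contraction $\rho^2<1$, which is exactly the role of the first step-size bound in Table~\ref{table}. Establishing this simultaneously with a \emph{uniform} spectral-radius bound over all $\lambda_i(A)$ in the momentum regime, and controlling the non-normality of $P$ through a carefully chosen $Q$ (so that $\|P^n\|$ in the $Q$-norm is governed by $\rho^n$ up to logarithmic corrections), is the technical crux of the argument.
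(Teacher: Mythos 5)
Your overall strategy is the same as the paper's: augment the state, block-diagonalize $P$ into $2\times 2$ blocks per eigenvalue of $A$, identify $\rho(P)=1-\alpha\lambda_{\min}(A)$ for $\eta=0$ and $\rho(P)=\sqrt{\eta}$ (resp.\ $\sqrt{\eta(1-\alpha\lambda_{\min}(A))}$) in the momentum regime, and close a scalar second-moment recursion that accounts for the state-dependent noise of Assumption~\ref{A3}. The paper packages the contraction as $\|P^n\|\le \hat C\,\rho(P)^n$ and a majorizing sequence rather than a quadratic Lyapunov form $Q$, but that difference is cosmetic.

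There is, however, one genuine quantitative gap in the momentum case, and it sits exactly where you say the crux is. When $\eta=\bigl(1-\tfrac12\sqrt{\alpha\lambda_{\min}(A)}\bigr)^2$, the two eigenvalues of each block $P_i$ are separated only by $\sqrt{|\Delta_i|}\asymp\sqrt{\alpha\lambda_{\min}(A)}$, so the similarity transform $T$ that diagonalizes $P$ has condition number of order $1/\sqrt{\alpha\lambda_{\min}(A)}$; equivalently, the constant in $\|P^n\|\le \hat C\rho^n$ satisfies $\hat C\asymp C/\sqrt{\alpha\lambda_{\min}(A)}$ (the paper's Lemma bounding $\hat C\le 5C/\sqrt{\alpha\lambda_{\min}(A)}$, proved by explicitly lower-bounding $|\Delta_i|\ge\tfrac{15}{16}\alpha\lambda_{\min}(A)$). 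This factor enters \emph{squared} in both the multiplicative and additive noise terms of your recursion: the correct form is $V_{n+1}\le\bigl(\rho^2+\hat C^2K\alpha^2\bigr)V_n+\hat C^2K\alpha^2$ with $\hat C^2K\alpha^2\asymp C^2K\alpha/\lambda_{\min}(A)$, not $C^2K\alpha^2$ as you wrote. Your version understates the steady-state level by a factor of $1/(\alpha\lambda_{\min}(A))$: balancing $\alpha^2C^2K/(1-\gamma)\le\epsilon/2$ with $1-\gamma\asymp\sqrt{\alpha\lambda_{\min}(A)}$ would permit $\alpha\asymp\epsilon^{2/3}$ and hence $n_0\asymp\epsilon^{-1/3}$, which contradicts the $\Omega(1/\epsilon)$ lower bound of Theorem~\ref{Lower_bound} (e.g.\ for i.i.d.\ Gaussian noise satisfying both noise assumptions) and does not reproduce the $\alpha\le\bigl(\epsilon\lambda_{\min}(A)^{3/2}/(200C^2K)\bigr)^2$ entry of Table~\ref{table}. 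The fix is precisely to track this condition-number factor through the recursion, which is what Lemmas~\ref{c_hat_lemma} and \ref{c_tilde_lemma} do; with it, both step-size constraints in Table~\ref{table} come out with the stated exponents and the claimed orders of $n_0$ follow.
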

% \begin{corollary}
% For small enough $\epsilon$, the expected error $\mathbb{E}[\| x_{n}-x^*\|^2] \leq \epsilon$, $\forall n \geq n_{0}$, where $n_{0} \in \tilde{\Theta}\left(\frac{K}{\epsilon\lambda_{min}(A)^2}\right)$.
% \end{corollary}
From Table \ref{table}, we see that for each case $\alpha$ is a minimum of three terms. The first term arises due to the unbounded noise (Assumption \ref{A3}), the second due to the target neighborhood $\epsilon$  and the third from the optimal choice of step-size in the deterministic case. Since the bound provided in Theorem \ref{Theorem_OTS_mom} is in terms of $\alpha$, the minimum of the three terms dictate the sample complexity. If $\epsilon$ is small enough such that the second term is the minimum, we obtain the following bound:
\begin{corollary}[Upper bound on sample complexity]
    \label{Upper_bound_cor}
    Consider the LSA-M iterate and suppose Assumptions~\ref{A1} and \ref{A3} hold. Then for choice of parameters in Table~\ref{table}$, \forall  \epsilon>0$ small enough, $\exists n_0 \in \Tilde{\Theta}\left(\frac{K}{\epsilon\lambda_{min}(A)^2}\right)$ such that $\mathbb{E}[\| x_{n}-x^*\|^2] \leq \epsilon$, $\forall n \geq n_{0}$.
\end{corollary}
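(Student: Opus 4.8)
The plan is to obtain the corollary directly from Theorem~\ref{Theorem_OTS_mom} by substituting the explicit stepsize values from Table~\ref{table} into the sample complexity orders, after first isolating the regime in which $\epsilon$ dictates the stepsize. Since $\alpha$ in each row of Table~\ref{table} is a minimum of three terms---one depending only on the problem constants $\lambda_{min}(A), \lambda_{max}(A), C, K$, one scaling with $\epsilon$, and one equal to the deterministic optimal stepsize---I would first argue that for all sufficiently small $\epsilon$ the $\epsilon$-dependent (second) term is strictly smaller than the other two, so that the minimum is achieved there. For SGD this gives $\alpha = \frac{\epsilon\lambda_{min}(A)}{4C^2K}$, while for SHB and ASG it gives $\alpha = \left(\frac{\epsilon\lambda_{min}(A)^{3/2}}{200C^2K}\right)^2$.

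Next I would plug these choices into the two cases of Theorem~\ref{Theorem_OTS_mom}. For the SGD case ($\eta = 0$) the theorem yields $n_0 \in \tilde{\Theta}(1/(\alpha\lambda_{min}(A)))$; substituting $\alpha \propto \epsilon\lambda_{min}(A)$ produces $n_0 \in \tilde{\Theta}(K/(\epsilon\lambda_{min}(A)^2))$ once $C$ (fixed for a given $A$) is absorbed into the hidden constants. For the momentum case ($\eta > 0$) the theorem instead gives the faster-looking $n_0 \in \tilde{\Theta}(1/\sqrt{\alpha\lambda_{min}(A)})$; but here $\alpha \propto \epsilon^2\lambda_{min}(A)^3$, so that $\sqrt{\alpha\lambda_{min}(A)} \propto \epsilon\lambda_{min}(A)^2$, and the bound again collapses to $n_0 \in \tilde{\Theta}(K/(\epsilon\lambda_{min}(A)^2))$.

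The crux of the result---and the step I would highlight rather than the routine arithmetic---is precisely this cancellation: the $\sqrt{\cdot}$ acceleration enjoyed by SHB and ASG in the iteration count is exactly offset by their need for a quadratically smaller stepsize ($\alpha \propto \epsilon^2$ versus $\alpha \propto \epsilon$ for SGD), so all three methods land on the same order $\tilde{\Theta}(K/(\epsilon\lambda_{min}(A)^2))$. I anticipate no genuine obstacle beyond carefully pinning down the ``$\epsilon$ small enough'' threshold at which the second term in each minimum dominates; this reduces to comparing the second term against the first and third and solving for $\epsilon$, which is elementary but should be verified for each of the three parameter rows so that the chosen regime is consistent across methods.
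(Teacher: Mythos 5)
Your proposal is correct and follows exactly the route the paper takes: the paper's justification for the corollary is precisely the observation that for $\epsilon$ small enough the $\epsilon$-dependent term in Table~\ref{table} attains the minimum, after which substituting $\alpha \propto \epsilon\lambda_{min}(A)$ into the $\tilde{\Theta}(1/(\alpha\lambda_{min}(A)))$ bound for SGD and $\alpha \propto \epsilon^2\lambda_{min}(A)^3$ into the $\tilde{\Theta}(1/\sqrt{\alpha\lambda_{min}(A)})$ bound for SHB/ASG yields $\tilde{\Theta}(K/(\epsilon\lambda_{min}(A)^2))$ in all three cases. Your emphasis on the cancellation between the square-root acceleration and the quadratically smaller stepsize is also the intended takeaway (cf.\ Remarks~\ref{R4} and \ref{R5}).
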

\begin{remark}
    \label{R4}
    From Corollary \ref{Upper_bound_cor}, we see that for small enough $\epsilon>0$ the upper bound on the sample complexity of SGD, SHB and ASG match their lower bound in Theorem \ref{Lower_bound}. In particular, since an upper bound on the sample complexity of SGD matches a lower bound on SHB and ASG, these methods cannot outperform SGD from a sample complexity perspective.
\end{remark}
\begin{remark}
    \label{R5}
    When the noise is assumed to be bounded, the first term in the choice of $\alpha$ in Table~\ref{table} vanishes for all three parameter choices. Under such an assumption, if $\epsilon$ is large enough or $K$ is small enough such that the third term in the choice of $\alpha$ is the minimum, then the sample complexity of both SHB and ASG is better than SGD. We emphasize that such improvements are lost when the noise variance is large or the neighbourhood under consideration is small.
\end{remark}

\section{Proof Outline}
\label{sec:proof}
In this section we provide the proof outline for Theorem~\ref{Lower_bound} and Theorem~\ref{Theorem_OTS_mom}. Section~\ref{proof1} provides an outline of all the key steps to prove the lower bound on the sample complexity. The proof of all the intermediate lemmas have been pushed to Appendix B. In section~\ref{proof2} we provide a sketch of the proof of Theorem~\ref{Theorem_OTS_mom}. The detailed proof can be found in Appendix C.
\subsection{Proof Outline for Theorem~\ref{Lower_bound}}
\label{proof1}
For simplicity, let $d = 1$ and $A\equiv\lambda \in \mathbb{R}$. The general multivariate case is handled subsequently.
We begin by defining the transformed iterates $\Tilde{x}_{n} = x_{n} - x^*$ and rewrite \eqref{sgd-m} as:
\begin{equation}
\label{recursion}
    \Tilde{X}_{n} = P \Tilde{X}_{n-1} + \alpha W_{n},
\end{equation}
where,
\(\Tilde{X}_{n} \triangleq
\begin{bmatrix}
    \Tilde{x}_{n} \\
    \Tilde{x}_{n-1}      
\end{bmatrix},
W_{n} \triangleq
\begin{bmatrix}
    M_{n}\\
    0
\end{bmatrix}\)
and
\(
P \triangleq
\begin{bmatrix}
    1 - \alpha \lambda + \eta(1-\alpha\beta\lambda) & -\eta(1-\alpha\beta\lambda)   \\
    1 & 0      
\end{bmatrix}. 
\)
For ease of exposition, we first consider the case when $\beta = 0$, corresponding to SHB and consider the general case for $\beta \in [0,1]$ later. The objective here is to find an $n$ such that the error $\mathbb{E}[\|\tilde{X}_{n}\|^2]$ is lower bounded by $\epsilon$. Towards this, we decompose the error expression as follows:
\begin{align}
\label{general-lower-bound}
    \mathbb{E}[\| \tilde{X}_n\|^2] &= \mathbb{E}[\| P^n \Tilde{X}_{0} + \alpha \sum_{j=0}^{n-1}P^{n-1-j}W_{j+1} \|^2] \nonumber \\
    &= \| P^n  \Tilde{X}_{0} \|^2 + \mathbb{E}[\alpha^2   \sum_{j=0}^{n-1} \|P^{n-1-j}W_{j+1} \|^2 ] \nonumber \\
    &\geq \| P^n  \Tilde{X}_{0} \|^2 + \alpha^2  K \sum_{j=0}^{n-1} \|P^{j} e_1 \|^2, 
    %\mbox{where
    %\(e_1 = \begin{pmatrix}
    %1\\
    %0
%\end{pmatrix}\).}
\end{align}
where, $e_1=\begin{pmatrix}
1\\
0
\end{pmatrix}$. Here, the second equality follows since each $W_{j+1}$ is a martingale difference sequence and the inequality follows from Assumption \ref{A3}. The first term here corresponds to the \emph{bias} and the second term corresponds to the \emph{variance}.

% We prove the claim by showing that the expected error $\mathbb{E}[\|\tilde{X}_n\|^2]\geq\epsilon$ for some $n \in \tilde{\Theta}(\frac{K}{\epsilon \lambda^2})$. We show that this holds for some $\epsilon$ and all choices of $\alpha>0$ and $\eta \in [0,1]$.

% \textbf{Case 1:} $\eta \in [0,(1-\sqrt{\alpha \lambda})^2)$ -
% We point out that in this case, the eigenvalues $\mu_+$ and $\mu_-$ are both real and positive. We also have $\rho(P)=\mu_+$, which we use interchangeably.

Let $n=\frac{K}{64\epsilon \lambda^2}\log(\frac{\|\tilde{X_0}\|^2}{\epsilon})$. We will show that for this choice of $n$, the expected error will always be greater than or equal to $\epsilon$ for all $\alpha >0$ and $\eta \in [0,1]$. From \eqref{general-lower-bound}:
\begin{gather*}
    \mathbb{E}\|\tilde{X}_n \|^2
    \geq \|P^n \tilde{X_0} \|^2 + \alpha^2 K \sum_{j=0}^{n-1} \Big\|P^{j}e_{1}\Big\|^2.
\end{gather*}
If $\alpha$ is such that $\|P^n \tilde{X_0} \|^2 \geq \epsilon$, then the claim immediately follows for this choice of $\alpha$ and $\eta$. We now consider the case where $\alpha$ and $\eta$ are such that $\|P^n \tilde{X_0} \|^2 < \epsilon$. Here we will show that for this choice of $\alpha$ and $\eta$, the \emph{variance} will necessarily be greater than $\epsilon$. Let $\mu_{+}$ and $\mu_{-}$ be the eigen-values of $P$.

% Using that $n=\frac{K}{64\epsilon \lambda^2}\log(\frac{1}{\epsilon})$ and $\|P^ne_1 \|^2 < \epsilon$, we get the following bound:
% \begin{lemma}
% \label{1byrho}
%      If $n=\frac{K}{64\epsilon \lambda^2}\log(\frac{1}{\epsilon})$ and $\|P^ne_1 \|^2 < \epsilon$, then $\frac{1}{1-\rho(P)} \leq \frac{K}{16\epsilon \lambda^2}$.
% \end{lemma}

% In order to bound the variance term, we introduce a new function $h(\eta, \alpha \lambda) := \frac{(1-\mu_+^2)(1-\mu_-^2)(1-\eta)}{(\alpha \lambda)^2}$. Since $\mu_+$ and $\mu_-$ are only functions of $\alpha \lambda$ and $\eta$, $h$ is well-defined.
\begin{lemma}
\label{lemma1}
    For all $\alpha>0$, $\eta \in [0,1]$, if $\|P^n \tilde{X_0} \|^2 < \epsilon$ for small enough $\epsilon>0$ then the variance term $\alpha^2 K \sum_{j=0}^{n-1} \Big\|P^{j}e_{1}\Big\|^2$ is bounded by $$\alpha^2 K \sum_{j=0}^{n-1} \Big\|P^{j}e_{1}\Big\|^2 \geq \frac{\alpha^2 K}{2(1-\mu_+^2)(1-\mu_-^2)(1-\eta)}.$$
\end{lemma}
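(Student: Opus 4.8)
The plan is to reduce the matrix power sum $\sum_{j=0}^{n-1}\|P^j e_1\|^2$ to an explicit scalar series, evaluate its infinite-horizon value in closed form, and then use the hypothesis $\|P^n\tilde{X}_0\|^2<\epsilon$ to argue that truncating at $n$ discards only a controlled fraction of that value. First I would exploit the companion structure of $P$ (with $\beta=0$). Writing $P^j e_1=(a_j,a_{j-1})^{T}$, one checks that $a_j$ obeys the characteristic recursion $a_{j+1}=(1-\alpha\lambda+\eta)a_j-\eta a_{j-1}$ with $a_{-1}=0$, $a_0=1$, so that $a_j=\frac{\mu_+^{\,j+1}-\mu_-^{\,j+1}}{\mu_+-\mu_-}$ and $\|P^j e_1\|^2=a_j^2+a_{j-1}^2$, where the roots satisfy $\mu_+\mu_-=\eta$ and $\mu_++\mu_-=1-\alpha\lambda+\eta$.

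Next I would evaluate the stationary sum. Squaring the closed form for $a_j$ and summing the three resulting geometric series gives, after simplification, $\sum_{j=0}^{\infty}a_j^2=\frac{1+\eta}{(1-\mu_+^2)(1-\mu_-^2)(1-\eta)}$, and hence $\sum_{j=0}^{\infty}\|P^j e_1\|^2=2\sum_{j=0}^{\infty}a_j^2=\frac{2(1+\eta)}{(1-\mu_+^2)(1-\mu_-^2)(1-\eta)}$; equivalently this is $\mathrm{tr}(\Sigma_\infty)/(\alpha^2K)$ for the discrete Lyapunov solution $\Sigma_\infty=P\Sigma_\infty P^{T}+\alpha^2K e_1e_1^{T}$. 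Since $\eta\in[0,1]$, the target bound equals exactly $\tfrac{1}{4(1+\eta)}\in[\tfrac18,\tfrac14]$ times this stationary value, so it suffices to show that the finite sum retains at least a quarter of the infinite one, i.e. that the tail $\sum_{j\ge n}\|P^j e_1\|^2$ is at most three quarters of the total.

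To control the tail I would use self-similarity: $\sum_{j\ge n}\|P^j e_1\|^2=\sum_{k\ge0}\|P^k(P^n e_1)\|^2$, the same Gramian quadratic form evaluated at $P^n e_1$ rather than $e_1$. The hypothesis $\|P^n\tilde{X}_0\|^2<\epsilon$ forces the spectral radius $\rho(P)$ to be strictly below $1$ (otherwise the bias could not be contracted below $\epsilon$ at this $n$, so the closed form above is genuinely positive) and, together with the prescribed large $n$, ensures that the decay governing $P^n\tilde{X}_0$ also governs $P^n e_1$ and the tail. Since the tail and $\|P^n e_1\|^2$ scale with this same decay up to factors that are exactly the $(1-\mu_\pm^2)$ denominators appearing in the total, the tail becomes a vanishing fraction of the total as $\epsilon\to0$, comfortably inside the $3/4$ margin, which yields the stated bound.

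I expect this last step to be the main obstacle. Comparing $\|P^n\tilde{X}_0\|$ along a single fixed direction with the rate controlling the tail is clean only when $P$ is normal; here $P$ is non-normal, its eigenvalues may be complex (the underdamped regime $\mu_\pm=re^{\pm i\theta}$, where $1-\mu_+^2$ and $1-\mu_-^2$ are complex conjugates with positive product) and may coalesce (the critically damped case $\mu_+=\mu_-$, where the $1/(\mu_+-\mu_-)$ is only a removable singularity and $P$ is defective). Establishing the tail estimate uniformly over all $\alpha>0$ and $\eta\in[0,1]$, covering these boundary regimes and absorbing the eigenvector-conditioning constants into the ``$\epsilon$ small enough'' clause, is the delicate part; it is precisely the factor-of-four slack between the target and the stationary value that makes a crude, regime-independent tail bound adequate.
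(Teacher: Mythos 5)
Your setup is sound and your closed-form evaluation of the infinite sum is correct: with $a_j=(\mu_+^{j+1}-\mu_-^{j+1})/(\mu_+-\mu_-)$ one indeed gets $\sum_{j\ge 0}a_j^2=(1+\eta)/\bigl((1-\mu_+^2)(1-\mu_-^2)(1-\eta)\bigr)$, so the target bound is exactly a $\tfrac{1}{4(1+\eta)}$-fraction of the stationary value. The genuine gap is the step you yourself flag as the main obstacle: you never prove that the tail $\sum_{j\ge n}\|P^je_1\|^2$ is at most $3/4$ of the total. This is not a routine omission. The tail's leading contribution has the form $\mu_+^{2n+2}/\bigl((1-\mu_+^2)(\mu_+-\mu_-)^2\bigr)$ (with $\sin^2\omega$ playing the role of $(\mu_+-\mu_-)^2/(-4\eta)$ in the complex regime), whereas the hypothesis only controls the single combination $(\mu_+^n-\mu_-^n)^2/(\mu_+-\mu_-)^2=a_{n-1}^2\le\|P^ne_1\|^2<\epsilon$. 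Converting the latter into a bound on the former, uniformly over all $\alpha>0$ and $\eta\in[0,1]$ --- in particular near eigenvalue coalescence, where $\mu_+^n-\mu_-^n$ can be far smaller than $\mu_+^n$, and in the underdamped regime, where $\sin(n\omega)$ may nearly vanish at the specific $n$ in question --- requires a case analysis you do not supply. As written, ``the tail becomes a vanishing fraction of the total as $\epsilon\to0$'' is an assertion, not a proof.

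The paper's proof avoids this difficulty entirely by never passing to the infinite sum: it writes the \emph{finite} sum of the three geometric series over the common denominator $(1-\mu_+^2)(1-\mu_-^2)(1-\mu_+\mu_-)$, groups the $n$-dependent terms of the numerator into pieces that are individually nonnegative (using $\mu_+\mu_-=\eta\in[0,1]$ and $\mu_+^{n-1}+\mu_-^{n-1}\ge 2(\mu_+\mu_-)^{(n-1)/2}$), and absorbs the one remaining negative term $-(\mu_+^n-\mu_-^n)^2$ using precisely the quantity the hypothesis controls, giving a numerator at least $(1-\epsilon)(\mu_+-\mu_-)^2$; the defective case $\mu_+=\mu_-$ is handled by an explicit Jordan-form computation and a limit. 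If you wish to salvage your route you must establish the tail estimate with that same uniformity; the more economical fix is to manipulate the truncated sum directly as the paper does.
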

% Note that, from the above lemma, the \emph{variance} bound improves (i.e., decreases) when $\mu_+$, $\mu_-$ or $\eta$ decrease. Though HB momentum is known to improve the convergence rate $\mu_+$, the \emph{variance} bound however becomes worse due increased $\mu_-$ and $\eta$. Surprisingly, the increased \emph{variance} of HB is such that it negates any improvement in the convergence rate. We quantify this in the following lemma by showing that the \emph{variance} is ultimately bounded below by a linearly increasing function of $\mu_+$. 
Note that, from the above lemma, the \emph{variance} bound improves (i.e., decreases) when $\eta$ decreases. Using this, we show that the \emph{variance} is ultimately bounded below by a linearly decreasing function of $\rho(P)$ in the following lemma.
\begin{lemma}
\label{lemma2}
    For all $\alpha >0$, $\eta \in [0,1]$,  if $\|P^n \tilde{X_0} \|^2 < \epsilon$ for small enough $\epsilon>0$, then $\alpha^2 K \sum_{j=0}^{n-1} \Big\|P^{j}e_{1}\Big\|^2 \geq \frac{K}{16 \lambda^2}(1-\rho(P)).$
\end{lemma}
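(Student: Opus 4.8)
The plan is to feed the closed-form variance bound of Lemma~\ref{lemma1} into the target inequality and reduce everything to a single rational inequality in the two eigenvalues $\mu_+,\mu_-$ of $P$, which I then verify by a short case analysis. First I would record Vieta's relations for the $\beta=0$ matrix $P$, namely $\mu_+ + \mu_- = 1-\alpha\lambda+\eta$ and $\mu_+\mu_- = \eta$. From these, $(1-\mu_+^2)(1-\mu_-^2) = (1+\eta)^2 - (1-\alpha\lambda+\eta)^2 = \alpha\lambda\,(2+2\eta-\alpha\lambda)$. Substituting this into Lemma~\ref{lemma1} collapses the variance lower bound to $\tfrac{\alpha K}{2\lambda(2+2\eta-\alpha\lambda)(1-\eta)}$, so the claim becomes equivalent to the scalar inequality $\tfrac{8\alpha\lambda}{(2+2\eta-\alpha\lambda)(1-\eta)} \ge 1 - \rho(P)$.

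Next I would peel off the two symmetric factorizations $(1-\mu_+)(1-\mu_-) = \alpha\lambda$ and $(1+\mu_+)(1+\mu_-) = 2+2\eta-\alpha\lambda$, together with $1-\eta = 1-\mu_+\mu_-$; these rewrite the target purely in terms of $\mu_\pm$. Before the case analysis I note that the hypothesis $\|P^n \tilde{X}_0\|^2 < \epsilon$ with $\epsilon$ small (and $\tilde{X}_0 \neq 0$) forces the stable regime $\rho(P) < 1$; should $\rho(P) \ge 1$ the right-hand side is non-positive and the bound is trivial. In the stable regime all factors $1\pm\mu_\pm$ and $1-\eta$ are positive, which legitimises the divisions below.

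I would then split into three cases by the location of the eigenvalues. (i) Complex conjugate $\mu_\pm$: set $r = \rho(P) = \sqrt{\eta}$; a negative discriminant gives $(1-r)^2 < \alpha\lambda$, and bounding $2+2\eta-\alpha\lambda < 4$ while writing $1-\eta = (1-r)(1+r)$ yields $\tfrac{8\alpha\lambda}{(2+2\eta-\alpha\lambda)(1-\eta)} > \tfrac{2(1-r)}{1+r} \ge 1-r$. (ii) Real $\mu_+ \ge \mu_- \ge 0$: here $\rho(P) = \mu_+$, and after cancelling the common factor $1-\mu_+$ the claim becomes $8(1-\mu_-) \ge (1+\mu_+)(1+\mu_-)(1-\mu_+\mu_-)$; since $\mu_+\mu_- \ge \mu_-^2$ gives $1-\mu_+\mu_- \le (1-\mu_-)(1+\mu_-)$, it reduces to $(1+\mu_+)(1+\mu_-)^2 < 8$, true because each $\mu_\pm < 1$. (iii) Real $\mu_- \le \mu_+ \le 0$: here $\rho(P) = -\mu_-$, the left factors satisfy $1-\mu_\pm \ge 1$ and the right factors are all $\le 1$, so the inequality holds with a wide margin. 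The degenerate $\eta = 0$ (SGD) and repeated-root cases fall under (ii) or (iii) after relabelling.

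I expect the main obstacle to be the near-critical regime where both eigenvalues approach $1$: there $1-\rho(P) \approx \delta$ while $\alpha\lambda \approx \delta^2$, so any crude estimate of the form $1-\rho(P) \le c\,\alpha\lambda$ fails, and one is forced to keep the exact factorization $(1-\mu_+)(1-\mu_-) = \alpha\lambda$ and exploit the cancellation of the common factor $1-\rho(P)$. The careful bookkeeping of signs---knowing which eigenvalue realises $\rho(P)$ and ensuring every cancelled factor is positive---is the other delicate point, which is precisely why the argument is organised around the three cases above.
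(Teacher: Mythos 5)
Your proof is correct, and it shares the paper's overall skeleton: substitute the Lemma~\ref{lemma1} bound, invoke the Vieta identities $\mu_+\mu_-=\eta$, $(1-\mu_+)(1-\mu_-)=\alpha\lambda$, $(1+\mu_+)(1+\mu_-)=2+2\eta-\alpha\lambda\le 4$, and reduce the claim to a scalar inequality asserting that a certain ratio in $\mu_\pm$ is at most $8$. Where you diverge is in how that ratio is controlled. The paper bounds $1-\mu_\pm^2\le 2(1-\mu_\pm)$, arrives at $g(\eta,\alpha\lambda)=\tfrac{4(1-\mu_+)(1-\eta)}{\alpha\lambda}$, and then computes $\partial g/\partial\eta$ to show $g$ is increasing in $\eta$ over the real-eigenvalue range, so that the maximum is attained at $\eta=(1-\sqrt{\alpha\lambda})^2$ where it equals $8$; the complex range is handled by a second, decreasing-in-$\eta$ function $l(\eta,\alpha\lambda)$ evaluated at the same boundary. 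You instead keep the exact factorization, cancel the common factor $1-\rho(P)$, and close each case with elementary inequalities: $(1+\mu_+)(1+\mu_-)^2<8$ for real non-negative eigenvalues, the discriminant bound $(1-r)^2<\alpha\lambda$ together with $2+2\eta-\alpha\lambda<4$ in the complex case, and a trivial margin when both eigenvalues are non-positive. Your route avoids the calculus entirely, and it absorbs the $\alpha\lambda\ge 1$ and negative-real corner cases that the paper must treat as separate sub-cases; your closing remark about the near-critical regime (where $1-\rho(P)\approx\delta$ but $\alpha\lambda\approx\delta^2$) correctly identifies why the exact factorization, rather than a crude bound $1-\rho(P)\lesssim\alpha\lambda$, is indispensable. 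Both arguments rely on $\rho(P)<1$ to legitimize the cancellations, which you extract from the hypothesis $\|P^n\tilde{X}_0\|^2<\epsilon$ (and dispose of trivially otherwise), matching the paper's preliminary reduction.
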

Recall that $n=\frac{K}{64\epsilon \lambda^2}\log(\frac{\|\tilde{X_0}\|^2}{\epsilon})$ and $\|P^n\tilde{X}_{0} \|^2 < \epsilon$. Using this, we get the following lower bound on $1-\rho(P)$:
\begin{lemma}
\label{lemma3}
     If $n=\frac{K}{64\epsilon \lambda^2}\log(\frac{\|\tilde{X_0}\|^2}{\epsilon})$ and $\|P^n\tilde{X}_{0} \|^2 < \epsilon$, then $1-\rho(P) \geq \frac{16\epsilon \lambda^2}{K}$.
\end{lemma}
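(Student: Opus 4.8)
The plan is to read the hypothesis $\|P^n\tilde{X}_0\|^2<\epsilon$ as saying that the \emph{bias} has contracted below $\epsilon$ after exactly $n$ steps, and to argue that such fast contraction is possible only if $\rho(P)$ is bounded away from $1$ by the claimed amount. Concretely, I would prove the contrapositive: assuming $\rho(P)>1-\frac{16\epsilon\lambda^2}{K}$, I would show $\|P^n\tilde{X}_0\|^2\geq\epsilon$ for all small $\epsilon$, which contradicts the hypothesis and forces $1-\rho(P)\geq\frac{16\epsilon\lambda^2}{K}$.

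The first (and key) step is to establish a lower bound of the form $\|P^n\tilde{X}_0\|^2\geq c\,\rho(P)^{2n}\|\tilde{X}_0\|^2$ for a positive constant $c$ depending on $\tilde{X}_0$ and the eigenstructure of $P$ but not on $n$. Using the companion-type structure of $P$ (its second row is $(1,0)$), the eigenvectors are proportional to $(\mu_\pm,1)^\top$. Since we are necessarily in the regime $\rho(P)<1$ (otherwise the bias never drops below $\epsilon$), the starting vector $\tilde{X}_0$ cannot be the sub-dominant eigenvector, so it has nonzero projection onto the dominant eigendirection, which supplies such a $c>0$. I would handle the three spectral cases separately: real distinct eigenvalues (the projection argument above), a complex conjugate pair (where $|\mu_+|=|\mu_-|=\sqrt{\eta}=\rho(P)$, so both modes decay at the same rate), and the borderline repeated-eigenvalue case (where the Jordan factor only \emph{increases} $\|P^n\tilde{X}_0\|$, so the bound survives).

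Next I would substitute the prescribed $n=\frac{K}{64\epsilon\lambda^2}\log(\|\tilde{X}_0\|^2/\epsilon)$ and use $(1-\delta)^{2n}\geq e^{-2n\delta/(1-\delta)}$ with $\delta=\frac{16\epsilon\lambda^2}{K}$. A direct computation gives $2n\delta=\tfrac12\log(\|\tilde{X}_0\|^2/\epsilon)$, so $\rho(P)^{2n}\gtrsim(\epsilon/\|\tilde{X}_0\|^2)^{1/2+o(1)}$ and hence $\|P^n\tilde{X}_0\|^2\gtrsim c\sqrt{\epsilon}\,\|\tilde{X}_0\|$. Because $\sqrt{\epsilon}$ dominates $\epsilon$, this exceeds $\epsilon$ for all $\epsilon$ small enough, giving the desired contradiction; note that the factor $64$ in $n$ (against the target factor $16$) is precisely what creates this square-root slack, which in turn makes the unknown constant $c$ harmless.

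The main obstacle I anticipate is the first step: converting $\rho(P)$ into a clean lower bound on $\|P^n\tilde{X}_0\|$ \emph{uniformly} over all $\alpha>0$ and $\eta\in[0,1]$. The delicate regime is when the two eigenvalues nearly coalesce, where the eigenvector basis becomes ill-conditioned and a naive projection constant can degenerate; there I would instead pass to the Jordan form or bound $\|P^n\tilde{X}_0\|$ directly through the scalar recursion satisfied by its coordinates. Once a uniform $c>0$ (or even just the qualitative $\sqrt{\epsilon}$-versus-$\epsilon$ separation) is secured, the remaining logarithmic bookkeeping is routine.
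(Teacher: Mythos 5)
Your overall strategy (contrapositive, convert a spectral-radius bound into a lower bound on $\|P^n\tilde{X}_0\|$, then do the $(1-\delta)^{2n}\geq e^{-2n\delta/(1-\delta)}$ bookkeeping) runs parallel to the paper's, and your final logarithmic computation is correct. However, the step you yourself flag as the main obstacle is a genuine gap, and it cannot be closed in the form you propose. For an \emph{arbitrary} initial vector the lemma is simply false: take $\tilde{X}_0=0$, or $\tilde{X}_0$ proportional to the subdominant eigenvector $(\mu_-,1)^\top$, in which case $\|P^n\tilde{X}_0\|^2=|\mu_-|^{2n}\|\tilde{X}_0\|^2$ can drop below $\epsilon$ while $\rho(P)=|\mu_+|$ stays arbitrarily close to $1$. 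Your justification that ``$\tilde{X}_0$ cannot be the sub-dominant eigenvector'' because $\rho(P)<1$ is a non sequitur: the spectral radius constrains $P$, not the initial condition. Consequently there is no constant $c>0$, uniform over $\alpha>0$, $\eta\in[0,1]$ and all admissible $\tilde{X}_0$, in the inequality $\|P^n\tilde{X}_0\|^2\geq c\,\rho(P)^{2n}\|\tilde{X}_0\|^2$ (the eigenbasis conditioning degenerates near coalescence, exactly as you fear), and since your closing comparison $c\sqrt{\epsilon}\,\|\tilde{X}_0\|>\epsilon$ needs $\epsilon<c^2\|\tilde{X}_0\|^2$, the ``for all $\epsilon$ small enough'' threshold degenerates together with $c$.

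The paper sidesteps all of this: at the start of the lower-bound argument in Appendix B, $\tilde{X}_0$ is \emph{chosen} to be a vector with $\|\tilde{X}_0\|\geq 1$ attaining the operator norm, i.e.\ $\|P^n\tilde{X}_0\|=\|P^n\|\,\|\tilde{X}_0\|$, so that $\|P^n\tilde{X}_0\|^2\geq\rho(P^n)^2\|\tilde{X}_0\|^2\geq\rho(P)^{2n}$ holds with constant $1$ and no case analysis on the spectrum. This is legitimate because Theorem~\ref{Lower_bound} is a lower bound and hence only requires a worst-case initial condition; Lemmas~\ref{lemma1}--\ref{lemma3} are stated under that standing convention. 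From $\rho(P)^{2n}\leq\epsilon$ the paper then argues in the forward direction via $\epsilon\geq e^{-2n(1-\rho(P))/\rho(P)}$ and rearranges, using the slack between $64$ and $16$ to absorb an additive $+1$ rather than an unknown multiplicative constant. If you adopt the same worst-case choice of $\tilde{X}_0$, your contrapositive version goes through and is essentially equivalent; without it, the statement you are trying to prove is not true.
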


%From Lemmas \ref{1byrho} and \ref{a_bound} it follows that $h(\eta, \alpha \lambda) \leq \frac{K}{2\epsilon \lambda^2}$. 
% Combining lemmas this bound and Lemma \ref{second_term}, we get:

% \begin{gather*}
%      \alpha^2 K \sum_{j=0}^{n-1} \Big\|P^{j}e_{1}\Big\|^2 \geq \frac{(K/\lambda^2)}{2h(\eta, \alpha \lambda)} 
%      \geq \epsilon.
% \end{gather*}
\noindent This completes the proof for the univariate case when $\beta=0$.

We now consider the case where $\beta\in[0,1]$. We notice that in the univariate case, LSA-M with parameters $\beta$ and $\eta$ is equivalent to LSA-M with parameters $\beta=0$ and $\eta'=\eta (1-\alpha \lambda \beta)$.  If $\alpha \lambda > 1$, regardless of choice of $\eta$, we have from \eqref{general-lower-bound}:
\begin{align*}
    \mathbb{E}\|\tilde{X}_n \|^2 \geq \|P^ne_1 \|^2 + \alpha^2 K \sum_{j=0}^{n-1}\|P^{j}e_{1}\|^2  \geq  \alpha^2 K >  \frac{K}{\lambda^2}   %\sum_{j=0}^{n-1}\|P^{j}e_{1}\|^2 \geq  \alpha^2 K >  \frac{K}{\lambda^2}  
\end{align*}
Thus, when $\epsilon < \frac{K}{\lambda ^2}$, the expected error will always be larger than $\epsilon$. When $\alpha \lambda < 1$, then $\eta'=\eta (1-\alpha \lambda \beta) \in [0,1]$ for all $\eta, \beta \in [0,1]$. Since this is equivalent to LSA-M with $\beta=0$ and $\eta' \in [0,1]$, the result follows from the lower bound on univariate LSA-M with $\beta=0$. This completes the proof for the univariate case for all $\alpha>0$, $\eta, \beta \in [0,1]$.

Finally, we consider the multivariate case, i.e., when $d>1$. 
For the multivariate case, we study the transformed iterates $\tilde{Y_n} = Z \tilde{X_n}$, where $Z:=E_{d\times d}
\begin{pmatrix}
    S & 0 \\
    0 & S
\end{pmatrix}$. Since $\Tilde{X}_{n} = P \Tilde{X}_{n-1} + \alpha W_{n},$ we get
\begin{align*}
    \tilde{Y}_{n} &=Z\Tilde{X}_{n} = Z P \Tilde{X}_{n-1} + \alpha Z W_{n}
    =  Z P \Tilde{X}_{n-1} + \alpha Z W_{n} \\
    &= Z P Z^{-1}\Tilde{Y}_{n-1}+ \alpha Z W_{n}
    = B \Tilde{Y}_{n-1}+ \alpha Z W_{n},
\end{align*}
where $$B=\begin{pmatrix}
        B_1 & 0 &\ldots & 0\\
        0 & B_2 & \ddots & \vdots\\
        \vdots & \ddots & \ddots & 0 \\
        0 & \ldots & 0 & B_d
    \end{pmatrix}, \mbox{ and } B_{i} = 
\begin{pmatrix}
    1 + \eta - \alpha \lambda_{i} & -\eta\\
    1 & 0
\end{pmatrix}.
$$

Let $\tilde{Y}_n = \begin{pmatrix}
    \tilde{Y}_n^{(1)} \\
    \tilde{Y}_n^{(2)} \\
    \vdots \\
    \tilde{Y}_n^{(d)} \\
\end{pmatrix}$ and 
$\hat{M}_{n} = S M_{n} =\begin{pmatrix}
    \hat{M}_n^{(1)} \\
    \hat{M}_n^{(2)} \\
    \vdots \\
    \hat{M}_n^{(d)} \\
\end{pmatrix}$,
where each $\tilde{Y}_n^{(i)} \in \mathbb{R}^2$ and $\hat{M}_n^{(i)} \in \mathbb{R}$. We will show that the update rule for each $\tilde{Y}_n^{(i)}$ is the same as that of the univariate case. Notice that $$\tilde{Y}_{n}^{(i)} = B_i \Tilde{Y}_{n-1}^{(i)}+ \alpha \begin{pmatrix}
    \hat{M}^{(i)}_n \\
    0
\end{pmatrix},$$
where $\mathbb{E}[\hat{M}^{(i)}_n| \mathcal{F}_{n-1}]=0$ and $\mathbb{E}[(\hat{M}^{(i)}_n)^2| \mathcal{F}_{n-1}] \geq \sigma_{\min}^2(S) K$. This is because $\mathbb{E}[\hat{M}_n| \mathcal{F}_{n-1}]=\mathbb{E}[S M_n| \mathcal{F}_{n-1}]=0$ and
\begin{align*}
    \mathbb{E}[(\hat{M}^{(i)}_n)^2| \mathcal{F}_{n-1}] &\geq \min_{\| v\| =1} \mathbb{E}[v^T\hat{M}_n \hat{M}^{T}_n v| \mathcal{F}_{n-1}] \\
    &= \min_{\| v\| =1} \mathbb{E}[v^T S M_n M^{T}_n S^T v| \mathcal{F}_{n-1}] \\
    &\geq \min_{\| v\| =1} \sigma_{\min}^2(S) \mathbb{E}[v^T M_n M^{T}_n v| \mathcal{F}_{n-1}] \\
    &\geq \sigma_{\min}^2(S) K.
\end{align*}

Let $\lambda_1 = \min_{i} \lambda_i$. Then, from results in the univariate case, $ \exists \epsilon' > 0$ such that for all $\alpha>0$ and $\beta, \eta \in [0,1]$, $\mathbb{E}[\|\tilde{Y}_{n}^{(1)}\|^2] \geq \epsilon'$ for some $n \in \Theta (\frac{\sigma_{\min}^2(S) K}{\epsilon' \lambda_{\min}^2})$. Letting $\epsilon = \frac{\epsilon'}{\sigma_{\max}^2(S)}$, it follows that $\mathbb{E}[\|\tilde{Y}_{n}^{(1)}\|^2] \geq \sigma_{\max}^2(S) \epsilon $ for some $n \in \Theta (\frac{\sigma_{\min}^2(S)  K}{\sigma_{\max}^2(S) \epsilon \lambda_{\min}^2})=\Theta (\frac{K}{\epsilon \lambda_{\min}^2})$ We ignore the term $\frac{\sigma_{\min}^2(S)}{\sigma_{\max}^2(S)}$ as it is a constant, and it is in fact equal to $1$ when $A$ is symmetric (diagonalising matrices of a symmetric matrix are orthogonal).
Finally,
\begin{align*}
   \mathbb{E}[\|\tilde{X}_{n}\|^2] &\geq \sigma^2_{\min}(Z) \mathbb{E}[\|\tilde{Y}_{n}\|^2] \geq  \sigma^2_{\min}(S^{-1}) \mathbb{E}[\|\tilde{Y}_{n}^{(1)}\|^2]
   \\ &= \frac{1}{\sigma^2_{\max}(S)} \mathbb{E}[\|\tilde{Y}_{n}^{(1)}\|^2] \geq \epsilon, 
\end{align*}
for some $n \in \Theta (\frac{K}{\epsilon \lambda_{\min}^2})$. 
%In particular, when $A$ is symmetric, $\frac{\sigma^2_{\min}(S)}{\sigma^2_{\max}(S)}$ since in this case $S$ is an orthogonal matrix.
\subsection{Proof Outline of Theorem~\ref{Theorem_OTS_mom}}
\label{proof2}
Here we provide a quick sketch of the proof when $\eta = 0$ (corresponding to SGD) and $\beta = 0$ (corresponding to SHB). A similar analysis follows for $\beta = 1$ and the detailed proof of all the three cases can be found in Appendix C.

The LSA-M iterate in \eqref{sgd-m} with $\eta = 0$  corresponds to:
\begin{equation*}
    \begin{split}
        \Tilde{x}_{n} &=
        (I - \alpha A)\Tilde{x}_{n-1} + \alpha M_{n}\\
        &= (I - \alpha A)^{n}\Tilde{x}_{0} + \alpha\sum_{i=0}^{n-1}[(I-\alpha A)^{n-1-i} M_{i+1}]
    \end{split}
\end{equation*}
Using Assumption-\ref{A3} the expected norm of the iterates can be bounded as follows:
\begin{equation*}
    \begin{split}
        \mathbb{E}[\|\Tilde{x_n}\|^2] &\leq \|(I-\alpha A)^{n}\|^2\Lambda \\ &+ \alpha^2 K\sum_{i=0}^{n-1}\|(I-\alpha A)^{(n-1-i)}\|^2 (1 + \mathbb{E}[\|\Tilde{x}_{i}\|^2])
    \end{split}
\end{equation*}
We next use the following lemma to bound $\|(I-\alpha A)^i\|$.
\begin{lemma}
\label{norm_upper_bound}
Let, $M \in \mathbb{R}^{d\times d}$ be a matrix and $\lambda_{i}(M)$ denote the $i^{th}$ eigen-value of $M$. Then, $\forall \delta > 0$
\[\|M^{n}\|\leq C_{\delta}(\rho(M) + \delta)^n\]
where $\rho(M) = \max_{i}|\lambda_{i}(M)|$ is the spectral radius of $M$ and $C_{\delta}$ is a constant that depends on $\delta$. Furthermore, if the eigen-values of $M$ are distinct, then
\[\|M^{n}\|\leq C(\rho(M))^n\]
\end{lemma}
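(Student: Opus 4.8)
The plan is to reduce everything to a similarity decomposition of $M$ and then control the powers of the resulting canonical form. Write $M = SJS^{-1}$, where $J = \mathrm{diag}(J_1,\dots,J_r)$ is the Jordan canonical form and each $J_k$ is a Jordan block of size $m_k$ with eigenvalue $\lambda_k$. Since $M^n = SJ^nS^{-1}$ and $J^n$ is block-diagonal with blocks $J_k^n$, submultiplicativity of the operator norm gives $\|M^n\| \le \|S\|\,\|S^{-1}\|\,\max_k\|J_k^n\|$, so it suffices to bound $\|J_k^n\|$ for a single block.

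For the first claim I would write $J_k = \lambda_k I + N$ with $N$ the nilpotent shift satisfying $N^{m_k}=0$, so the binomial theorem yields $J_k^n = \sum_{i=0}^{m_k-1}\binom{n}{i}\lambda_k^{\,n-i}N^i$. The entries of $J_k^n$ are of the form $\binom{n}{i}\lambda_k^{\,n-i}$, each bounded in magnitude (up to a fixed constant) by $n^{m_k-1}\rho(M)^{\,n-m_k+1}$, whence $\|J_k^n\|\le p(n)\,\rho(M)^n$ for a fixed polynomial $p$; the degenerate case $\rho(M)=0$ is trivial since then $M^n=0$ for $n\ge d$. The only remaining point is to absorb the polynomial factor: when $\rho(M)>0$, the sequence $p(n)\big(\tfrac{\rho(M)}{\rho(M)+\delta}\big)^n$ is bounded over $n\ge0$ by some finite $c_\delta$, so $p(n)\rho(M)^n \le c_\delta(\rho(M)+\delta)^n$, and setting $C_\delta = \|S\|\,\|S^{-1}\|\,c_\delta$ gives the stated bound. (An equivalent, cleaner route is to conjugate each block by $\mathrm{diag}(1,\epsilon,\dots,\epsilon^{m_k-1})$ to shrink its super-diagonal entries to $\epsilon$; for $\epsilon$ small relative to $\delta$ this produces an invertible $T$ with $\|TMT^{-1}\|\le\rho(M)+\delta$, and equivalence of norms again yields the claim.)

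For the second claim, distinctness of the eigenvalues makes $M$ diagonalizable, so $M=SDS^{-1}$ with $D$ diagonal and $M^n=SD^nS^{-1}$. Then $\|M^n\|\le\|S\|\,\|S^{-1}\|\,\|D^n\| = \|S\|\,\|S^{-1}\|\,\rho(M)^n$, since the operator norm of the diagonal matrix $D^n$ is exactly $\max_i|\lambda_i|^n=\rho(M)^n$. Taking $C=\|S\|\,\|S^{-1}\|$ finishes it; note that diagonalizability alone suffices here, which is precisely what Assumption~\ref{A1} guarantees in the application.

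The only genuinely nontrivial step is the passage from the $p(n)\,\rho(M)^n$ estimate to the clean geometric bound $(\rho(M)+\delta)^n$: this is where the constant $C_\delta$ must be allowed to grow without bound as $\delta\to0$, reflecting the non-normality encoded by the Jordan blocks. Everything else is routine bookkeeping with the Jordan and eigen-decompositions.
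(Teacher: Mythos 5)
Your proof is correct. It differs from the paper's in the mechanism used to pass from the Jordan block to the geometric bound: you expand $J_k^n=(\lambda_k I+N)^n$ by the binomial theorem, obtain a bound of the form $p(n)\,\rho(M)^n$ with $p$ a fixed polynomial, and then absorb the polynomial by noting that $p(n)\bigl(\rho(M)/(\rho(M)+\delta)\bigr)^n$ is bounded. The paper instead builds, for each $\delta$, an explicit matrix norm $\vertiii{X}:=\|D(1/\delta)S^{-1}XSD(\delta)\|_1$ (with $D(\delta)$ the diagonal scaling $\mathrm{diag}(\delta,\delta^2,\dots)$ on each Jordan block, i.e.\ exactly the conjugation you mention parenthetically as the ``cleaner route''), for which $\vertiii{M}=\rho(M)+\delta$ exactly; submultiplicativity then gives the geometric bound with no polynomial ever appearing, and converting back to the $2$-norm yields the fully explicit constant $C_\delta=\sqrt{d}/\bigl(\delta^{r-1}\sigma_{\min}(S)\sigma_{\min}(S^{-1})\bigr)$, where $r$ is the largest Jordan block size. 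Your route is more elementary and needs the small side remarks you correctly supply (the $\rho(M)=0$ case, and $\rho(M)^{-(m_k-1)}$ entering the polynomial's coefficients); the paper's route buys an explicit $C_\delta$, and in particular the explicit $C=\sqrt{d}/(\sigma_{\min}(S)\sigma_{\min}(S^{-1}))$ in the distinct-eigenvalue case, which is then reused verbatim in Table~\ref{table} and in the bounds on $\hat{C}$ in Lemma~\ref{c_hat_lemma}. Your constant $\|S\|\,\|S^{-1}\|$ for the second claim is consistent with the paper's up to the $\sqrt{d}$ factor (which the paper incurs only through its detour via the $\ell_1$-induced norm), since $\|S\|\,\|S^{-1}\|=1/(\sigma_{\min}(S)\sigma_{\min}(S^{-1}))$.
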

\begin{proof}
See Appendix-D.
\end{proof}
Using \textbf{Assumption \ref{A1}} and Lemma \ref{norm_upper_bound}, we have
\begin{equation*}
    \begin{split}
        \mathbb{E}[\|\Tilde{x}_{n}\|^2] &\leq C^2(1-\alpha\lambda_{\min}(A))^{2n}\Lambda \\ &+ C^2\alpha^2K\sum_{i=0}^{n-1}(1-\alpha\lambda_{\min}(A))^{2(n-1-i)}(1 + \mathbb{E}[\|\Tilde{x}_{i}\|^2])
    \end{split}
\end{equation*}
\begin{equation}
    \mbox{where, \quad } C = \frac{\sqrt{d}}{\sigma_{\min}(S)\sigma_{\min}(S^{-1})},
\end{equation}
$S$ is the matrix in Jordan decomposition of $A$ and $\sigma_{\min}(S)$ is the smallest singular value of $S$. 
We define the sequence $\{U_{k}\}$ as below:
\begin{align*}
    U_{k} &= C^2(1-\alpha\lambda_{\min}(A))^{2k}\Lambda \\ &+ C^2\alpha^2K\sum_{i=0}^{k-1}(1-\alpha\lambda_{\min}(A))^{2(k-1-i)}(1 + U_{i})
\end{align*}
Observe that $\mathbb{E}[\|\Tilde{x}_{n}\|^2] \leq U_{n}$ and that the sequence $\{U_{k}\}$ satisfies
\begin{align*}
    U_{k+1} &= (1-\alpha\lambda_{\min}(A))^{2}U_{k} \\&+ C^{2}K\alpha^{2}(1 + U_{k});\quad
    U_{0} = C^{2}\Lambda
\end{align*}
Therefore, we have 
\[U_{k+1} = \left((1-\alpha\lambda_{\min}(A))^{2} + C^{2}K\alpha^{2}\right)U_{k} + \alpha^{2}C^{2}K\]
Since $\alpha \leq \frac{\lambda_{\min}(A)}{\frac{3}{4}\lambda_{\min}(A)^{2} + C^{2}K}$, one can show that
\begin{equation*}
    \begin{split}
        U_{n} & \leq \left(1 - \frac{\alpha\lambda_{\min}(A)}{2}\right)^{2n}U_{0} + \alpha^{2}C^{2}K \frac{2}{\alpha\lambda_{min}(A)}\\
    \end{split}
\end{equation*}
We assume that $\alpha \leq \frac{1}{\lambda_{\min}(A)}$ and therefore $(1-\frac{\alpha\lambda_{\min}(A)}{2})^{2} \leq e^{-\alpha\lambda_{\min}(A)}$.
\begin{equation*}
    \begin{split}
        U_{n}\leq e^{-n\alpha\lambda_{\min}(A)}C^{2}\Lambda + \frac{2\alpha C^{2}K}{\lambda_{min}(A)}
    \end{split}
\end{equation*}
Choose
\(\alpha \leq \frac{\epsilon\lambda_{min}(A)}{4C^2K}.\)
Then,
\[\frac{2\alpha C^{2}K}{\lambda_{min}(A)} \leq \frac{\epsilon}{2}
\Rightarrow 
\mathbb{E}[\|\Tilde{x}_{n}\|^2] \leq U_{n} \leq \frac{\epsilon}{2} + \frac{\epsilon}{2} = \epsilon,\]
when the sample complexity is:
\[n = \frac{1}{\alpha\lambda_{\min}(A)}\log\left(\frac{2C^2 \Lambda}{\epsilon}\right)\]

As in proof of Theorem~\ref{Lower_bound}, we begin by transforming the iterate given by \eqref{sgd-m} into the following iterate for $\beta = 0$:
\begin{gather*}
    \Tilde{X}_{n} = P \Tilde{X}_{n-1} + \alpha W_{n}
\end{gather*}
where, 
\(\Tilde{X}_{n} \triangleq
\begin{bmatrix}
    \Tilde{x}_{n} \\
    \Tilde{x}_{n-1}      
\end{bmatrix},
\)
\(
W_{n} \triangleq
\begin{bmatrix}
    M_{n}\\
    0
\end{bmatrix},
\)
and
\(
\label{P_def}
    P \triangleq
\begin{bmatrix}
    I - \alpha A + \eta I & -\eta I   \\
    I & 0      
\end{bmatrix}. 
\)

Using Assumption \ref{A2}, we can obtain the following bound on the mean squared error:
\[
\mathbb{E}[\|\Tilde{X}_{n}\|^2] \leq \|P^n\|^2 \|\Tilde{X}_{0}\|^2+ \alpha^2K\sum_{i=0}^{n-1}\|P^{n-1-i}\|^2(1 + \mathbb{E}[\|\Tilde{X}_{i}\|^{2}])
\]
Here, the first term corresponds to the bias and the second term correspond to the variance. A spectral analysis of $P$ tells us that the best choice of $\eta$ is given by $\eta = (1-\sqrt{\lambda_{\min}(A)\alpha})^2$
with \(\alpha \leq \big(\frac{2}{\sqrt{\lambda_{min}(A)} + \sqrt{\lambda_{max}(A)}}\big)^{2}\). However, to simplify the analysis we choose the parameter $\eta = \big(1-\frac{\sqrt{\lambda_{\min}(A)\alpha}}{2}\big)^2$, which maintains the same rate of decay of the bias term.
Using \textbf{Assumption \ref{A1}} and Lemma 9, we show that
\begin{gather*}
    \mathbb{E}[\|\Tilde{X}_{n}\|^2] \leq \hat{C}^2\left(1-\frac{\sqrt{\lambda_{\min}(A)\alpha}}{2}\right)^{2n} \Lambda
    + \alpha^2\hat{C}^2K\sum_{i=0}^{n-1}\left(1-\frac{\sqrt{\lambda_{\min}(A)\alpha}}{2}\right)^{2(n-1-i)}\!\!\!\!\!\!\!\!\!\!\!\!\!\!(1 + \mathbb{E}[\|\Tilde{X}_{i}\|^{2}]),
\end{gather*}
where $\Lambda = \|\tilde{x}_{0}\|^2$ and $\hat{C}$ depends on $\alpha$ and $\lambda$. The dependence of the second term on $\tilde{X}_{i}$ appears because we do not assume the noise variance to be uniformly bounded by a constant. This makes analyzing the above recursion challenging. Towards this, we define a sequence $\{V_{n}\}$,
\begin{gather*}
    V_{n} = \hat{C}^2\left(1-\frac{\sqrt{\lambda_{\min}(A)\alpha}}{2}\right)^{2n} \Lambda + \alpha^2\hat{C}^2K\sum_{i=0}^{n-1}\left(1-\frac{\sqrt{\lambda_{\min}(A)\alpha}}{2}\right)^{2(n-1-i)}(1 + V_{i}).
\end{gather*}
Choosing
\begin{align*} 
    \alpha \leq \min\Big((\frac{\lambda_{\min}(A)^{\frac{3}{2}}}{\frac{3}{8}\lambda_{\min}(A)^{2} + 25C^2K})^{2}, (\frac{\epsilon(\lambda_{\min}(A))^{3/2}}{200 C^2 K})^2,(\frac{2}{\sqrt{\lambda_{min}(A)}+\sqrt{\lambda_{max}(A)}})^2\Big)
\end{align*}
ensures that
\(\mathbb{E}[\|\Tilde{X}_{n}\|^2] \leq V_{n} \leq  e^{-n\frac{\sqrt{\lambda_{\min}(A)\alpha}}{2}}\hat{C}^{2}\Lambda + \frac{4\alpha^{2}\hat{C}^{2}K}{\sqrt{\alpha\lambda_{min}(A)}}.
\)
Using Lemma 10, we bound $\hat{C}$ in terms of $C$. Here, $C=1$ when the matrix $A$ is symmetric and $C = \frac{\sqrt{d}}{\sigma_{\min}(S)\sigma_{\min}(S^{-1})}$ when $A$ is not symmetric, where $\sigma_{min}(\cdot)$ denotes the smallest singular value and $S$ is the matrix that diagonalizes $A$, i.e., $S^{-1}AS = D$, a diagonal matrix. With this we obtain
\[V_{n} \leq  e^{-n\frac{\sqrt{\lambda_{\min}(A)\alpha}}{2}}\frac{25 C^{2}}{\lambda_{\min}(A)\alpha}\Lambda + \alpha^{2}\frac{100 C^{2}K}{\left(\lambda_{\min}(A)\alpha\right)^{3/2}}.\]
Now, for \(n \geq \frac{4}{\sqrt{\alpha\lambda_{\min}(A)}}\log\left(\frac{1}{\lambda_{\min}(A)\alpha}\right),\) it can show that
\(V_{n} \leq 25 C^{2}\Lambda e^{-\frac{n}{4}\sqrt{\lambda_{\min}(A)}\alpha} + \frac{100\sqrt{\alpha}C^{2}K}{\left(\lambda_{\min}(A)\right)^{3/2}}\).
Next, we choose $\alpha$ small enough:
\(\alpha \leq \left(\frac{\epsilon(\lambda_{\min}(A))^{3/2}}{200 C^2 K}\right)^2\) to ensure that the second term in the above inequality is within an $\epsilon$ boundary and $n$ large enough:
\(n = \frac{4}{\sqrt{\alpha\lambda_{\min}(A)}}\log\left(\frac{50 C^2 \Lambda}{\epsilon}\right),\)
to ensure that the first term is within an $\epsilon$ boundary. Finally,
\begin{align*}
    n = \max\Bigg(\frac{4}{\sqrt{\alpha\lambda_{\min}(A)}}\log\left(\frac{50 C^2 \Lambda}{\epsilon}\right), \frac{4}{\sqrt{\alpha\lambda_{\min}(A)}}\log\left(\frac{1}{\lambda_{\min}(A)\alpha}\right)\Bigg)
\end{align*}
gives the desired bound on the sample complexity for $\beta = 0$.
% \section{Questions to Answer}
% %
% \begin{enumerate}
%     \item Comparison to \cite{ucla} and \cite{rahul}
    
%     \item Can we conclude that the result applies to the whole family of convex optimization problems? Depending on the answer, decide to include quadratic optimization in the abstract or not. 
% \end{enumerate}
\section{Concluding Remarks}
\label{sec:conclusion}
In this work, we analyze the sample complexity of SHB and ASG and provide matching lower and upper bounds up to constants and logarithmic terms.
More importantly, we show that the same sample complexity bound can be obtained by standard SGD. Our work also calls into question some of the recent positive results in favour of momentum methods in the stochastic regime. We show that such improvements do not take into account all the terms involved in the error decomposition, or have major flaws. Although some other results do question the superiority of momentum methods in the stochastic regime, the assumptions and the setting that these works look at do not correspond to those in the positive results. We also emphasize that the negative results either only consider small step-sizes and momentum parameters not close to 1 or provide such results for specific instances in linear regression. In contrast, our work shows that SHB and ASG cannot obtain an improvement in terms of sample complexity (for small neighbourhoods) over SGD for the entire family of quadratic optimization and holds for all step-sizes and all momentum parameters in $[0,1]$.

\bibliographystyle{apalike}
\bibliography{References}

\newpage
\appendix
%\section{Appendix}

\begin{center}
    \vspace{50mm}
    \Huge\textbf{Appendix}
\end{center}
\vspace{-2mm}
\par\noindent\rule{\textwidth}{0.1pt}
%\textbf{{\Large Appendix}}

\section{Comparison with recent works}
\label{App_A}
\subsection{Comparison with Mou et al.(2020)}
\label{App_A1}
Claim 1 in (p. 20, \citet{MJ}) analyzes the asymptotic covariance of the heavy ball momentum algorithm (with Polyak averaging) and claims a correction term that satisfies:
\[\mbox{Tr}(L_{\eta}) \lesssim \mathcal{O}(\eta\frac{\kappa^2(U)}{\lambda_{min}(A)^{3/2}})\]
where \(\tilde{A} = \begin{pmatrix}
    0 & I_{d}\\
    -\bar{A} & \alpha I_{d}+\eta \bar{A}
\end{pmatrix} = UDU^{-1}\) as in the decomposition of $\tilde{A}$ in Lemma 1 of \cite{MJ} and $\kappa(U) = \|U\|_{op}\|U^{-1}\|_{op}$.

However in the proof of claim 1, we are not sure if the following bound holds, since the matrix $\tilde{A}$ is not symmetric:
\[Tr(\tilde{A}^{-1}\mathbb{E}(\tilde{\Xi}_{A}\Lambda_{\eta}^{*}(\tilde{\Xi}_{A})^{T})(\tilde{A}^{-1})^{T}) \leq (\min_{i}|\lambda_{i}(\tilde{A})|)^{-2}(1 + \eta^{2}) v_{A}^{2}\mathbb{E}_{\pi_{\eta}}\|x_{t} - x^{*}\|_{2}^{2}
\]

Our calculation points towards the following bound:
\[\mbox{Tr}(L_{\eta}) \lesssim \mathcal{O}(\eta\frac{\kappa^2(U)}{\lambda_{min}(A)^{5/2}}).\]
We outline the proof for the uni-variate case, when $\bar{A} = \lambda$ for some $\lambda>0$. Then, 
\[\tilde{A} = \begin{pmatrix}
     0 & 1\\
    -\lambda & \alpha +\eta \lambda
\end{pmatrix},
\mbox{ and } 
\tilde{A}^{-1} = \frac{1}{\lambda}\begin{pmatrix}
     \alpha +\eta \lambda & -1\\
    \lambda & 0
\end{pmatrix}.\]
Observe that \(\displaystyle\tilde{A}^{-1}(\tilde{A}^{-1})^T = \frac{1}{\lambda^2}
\begin{bmatrix}
    1 + (\alpha+\eta\lambda)^2 & \lambda(\alpha + \eta\lambda)\\
    \lambda(\alpha + \eta\lambda) & \lambda^2
\end{bmatrix}\)
and therefore $\displaystyle Tr(\tilde{A}^{-1}(\tilde{A}^{-1})^T) = \mathcal{O}\left(\frac{1}{\lambda^2}\right)$.
Using this we have, \[Tr(\tilde{A}^{-1}\mathbb{E}(\tilde{\Xi}_{A}\Lambda_{\eta}^{*}(\tilde{\Xi}_{A})^{T})(\tilde{A}^{-1})^{T}) \leq \mathcal{O}(\frac{1}{\lambda^2})Tr(\mathbb{E}(\tilde{\Xi}_{A}\Lambda_{\eta}^{*}(\tilde{\Xi}_{A})^{T})\]
\begin{equation}
\label{mou_bound}
    \lesssim \mathcal{O}(\eta\frac{\kappa^2(U)}{\lambda^{5/2}})
\end{equation}
The second inequality follows as in \cite{MJ}. Next we analyze the dependence of $\kappa^2(U)$ on $\lambda$. Again for simplicity we consider the uni-variate case where $\bar{A} = \lambda$. Let
\(\displaystyle\tilde{A} = \begin{pmatrix}
     0 & 1\\
    -\lambda & \alpha +\eta \lambda
\end{pmatrix},\) be diagonalizable. Therefore,
\[
    \tilde{A} = U 
    \begin{bmatrix}
        \mu_{+} & 0\\
        0 & \mu_{-}
    \end{bmatrix}
    U^{-1},
\]
where $\mu_{+}$ and $\mu_{-}$ are the eigenvalues of $\tilde{A}$.
Let \(U = \begin{bmatrix}
    u_{1} & u_{2}\\
    u_{3} & u_{4}
\end{bmatrix}\). We therefore have,
\[
    \begin{bmatrix}
     0 & 1\\
    -\lambda & \alpha +\eta \lambda
    \end{bmatrix}
    \begin{bmatrix}
    u_{1} & u_{2}\\
    u_{3} & u_{4}
\end{bmatrix} = 
\begin{bmatrix}
    u_{1} & u_{2}\\
    u_{3} & u_{4}
\end{bmatrix}
\begin{bmatrix}
        \mu_{+} & 0\\
        0 & \mu_{-}
    \end{bmatrix}.
\]
Solving the system of equations, we get:
\[
U = \begin{bmatrix}
        1 & 1\\
        \mu_{+} & \mu_{-}
    \end{bmatrix}
    \mbox{ and }
    U^{-1} = \frac{1}{\mu_{+} - \mu_{-}}
    \begin{bmatrix}
        1 & 1\\
        \mu_{+} & \mu_{-}
    \end{bmatrix}
\]
Now, $\mu_{+} - \mu_{-} = \sqrt{(\alpha + \eta\lambda)-4\lambda}$. Using the choice of $\alpha = \sqrt{\lambda}$ as in \cite{MJ}, we have:
\[
\mu_{+} - \mu_{-} = \sqrt{ \lambda + \eta^2\lambda^2 + 2\alpha\eta\lambda - 4\lambda}
\]
\[
= \sqrt{\lambda}\sqrt{\eta^2\lambda+2\eta\sqrt{\lambda} - 3}
\]
For $\lambda << 1$ (which is the case where the momentum algorithm is claimed to improve the mixing time in \cite{MJ}), \(\mu_{+} - \mu_{-} \geq \mathcal{O}(\sqrt{\lambda})\).
As in proof of Lemma \ref{c_hat_lemma} in Appendix \ref{C2} and using the fact that $\displaystyle\|U\|_{op}\|U^{-1}\|_{op} = \sigma_{max}(U)\sigma_{max}(U^{-1})$, we can show that $\kappa^2(U) \leq \mathcal{O}(\frac{1}{\lambda})$. Combining with \eqref{mou_bound}, we have:
\[
\mbox{Tr}(L_{\eta}) \lesssim \mathcal{O}(\eta\lambda^{-7/2}).
\]
A similar analysis can be carried for the multivariate case to show that 
\[\mbox{Tr}(L_{\eta}) \lesssim \mathcal{O}(\eta\lambda_{min}(\bar{A})^{-7/2})\]
The correction term in SGD is $\mathcal{O}(\eta\lambda_{min}(\bar{A})^{-3})$ (See \cite{MJ}, Claim 1). 
The stationary distribution for the momentum algorithm is larger than that of SGD when $\lambda_{min}(A) << 1$. Indeed if we enforce that the two asymptotic covariances are of the same size by choosing the step-size for momentum iterate $\eta^{m}$ in terms of the step-size of SGD, i.e.,
\[
\mathcal{O}(\eta\frac{1}{\lambda_{min}(\bar{A})^3}) = \mathcal{O}(\eta^{m} \frac{1}{\lambda_{min}(\bar{A})^{7/2}}),
\]
then we must choose $\eta^{m} = \mathcal{O}(\eta\sqrt{\lambda_{min}(\bar{A})})$. 
In Appendix C.1. of \cite{MJ}, the mixing time of momentum iterate is shown to be $\displaystyle\mathcal{O}(\frac{1}{\eta^m\sqrt{\lambda_{min}(\bar{A})}})$, while the mixing time of SGD is $\displaystyle\mathcal{O}(\frac{1}{\eta\lambda_{min}(\bar{A})})$. When we choose $\eta^{m} = \mathcal{O}(\sqrt{\lambda_{min}(\bar{A})}\eta)$, then the mixing time of momentum algorithm turns out to be the same as SGD.  This behaviour is identical to what we observe in Theorem \ref{Theorem_OTS_mom}, where if we choose the same step-size then there is improvement by a square root factor. 
%However, to ensure that the variance terms are also of the same order, the step-size in momentum algorithm is forced to be small.  value which essentially nullifies the advantage.

\subsection{Comparison with SHB, Can et al. (2019)}
\label{App_A2}
For strongly convex quadratic functions of the form:
\[
f(x) = \frac{1}{2}x^TQx + a^Tx + b,
\]
where $x\in \mathbb{R}^d$, $Q\in R^{d\times d}$ is p.s.d, $a\in \mathbb{R}^d$, $b\in R$ and $\mu I_d \preceq Q \preceq LI_d$, \cite{zhu2} shows acceleration in Wasserstein distance by a factor of $\displaystyle\sqrt{\kappa} = \sqrt{\frac{L}{\mu}}$. The trace of the asymptotic covariance matrix $X_{HB}$ is given by (See Appendix C.2 of \cite{zhu2}):
\[
\mbox{Tr}(X_{HB}) = \sum_{i=1}^{d}\frac{2\alpha(1+\beta)}{(1-\beta)\lambda_{i}(2+2\beta-\alpha\lambda_{i})},
\]
where, $\alpha$ is the step-size, $\beta$ is the momentum parameter and $\lambda_{i}$ is the $i^{th}$ eigen-value of $Q$.
We show that the asymptotic covariance matrix is worse compared to when no momentum is used, i.e., $\beta = 0$ and the optimial step-size $\alpha = \frac{2}{\mu+L}$ is used. Substituting these values for $\beta$ and $\alpha$ we get:
\begin{align*}
    \mbox{Tr}(X_{\beta=0}) &= \sum_{i=1}^{d}\frac{2\frac{2}{\mu + L}}{\lambda_{i}(2-\frac{2}{\mu + L}\lambda_{i})}\\
    &= \sum_{i=1}^{d}\frac{2\frac{2}{\mu + L}}{\lambda_{i}\frac{2}{\mu+L}(\mu + L -\lambda_{i})}\\
    &= \sum_{i=1}^{d}\frac{2}{\lambda_{i}(\mu + L -\lambda_{i})}
\end{align*}
To compute the size of the stationary distribution with the iterates of heavy ball we set the step-size $\alpha =\displaystyle \frac{4}{(\sqrt{\mu}+\sqrt{L})^2}$ and momentum parameter $\beta = \left(\frac{\sqrt{L}-\sqrt{\mu}}{\sqrt{L}+\sqrt{\mu}}\right)^2$ and get:
\begin{align*}
    \mbox{Tr}(X_{HB}) &= \sum_{i=1}^{d}\frac{2\displaystyle\left(\frac{4}{(\sqrt{\mu} + \sqrt{L})^2}\right)(1+\beta)}{(1-\beta)\lambda_{i}\left(2+2\left(\frac{\sqrt{L}-\sqrt{\mu}}{\sqrt{L}+\sqrt{\mu}}\right)^2-\frac{4}{(\sqrt{\mu} + \sqrt{L})^2}\lambda_{i}\right)}\\
    &= \sum_{i=1}^{d}\frac{2\left(\displaystyle\frac{4}{(\sqrt{\mu} + \sqrt{L})^2}\right)(1+\beta)}{(1-\beta)\lambda_{i}\left(\displaystyle\frac{2(\sqrt{L}+\sqrt{\mu})^2 + 2(\sqrt{L}-\sqrt{\mu})^2-4\lambda_{i}}{(\sqrt{L}+\sqrt{\mu})^2}\right)}\\
    &= \sum_{i=1}^{d}\frac{2\left(\displaystyle\frac{4}{(\sqrt{\mu} + \sqrt{L})^2}\right)(1+\beta)}{(1-\beta)\lambda_{i}\left(\displaystyle\frac{4}{(\sqrt{\mu} + \sqrt{L})^2}\right)(\mu + L -\lambda_{i})}\\
    &= \sum_{i=1}^{d}\frac{2(1+\beta)}{(1-\beta)\lambda_{i}(\mu + L -\lambda_{i})}\\
    &= \sum_{i=1}^{d}\frac{2}{\lambda_{i}(\mu + L -\lambda_{i})}\left(\frac{1+\left(\frac{\sqrt{L} - \sqrt{\mu}}{\sqrt{L} + \sqrt{\mu}}\right)^2}{1-\left(\frac{\sqrt{L} - \sqrt{\mu}}{\sqrt{L} + \sqrt{\mu}}\right)^2}\right)\\
    &= \sum_{i=1}^{d}\frac{2}{\lambda_{i}(\mu + L -\lambda_{i})}\frac{L+\mu}{4\sqrt{\mu L}}\\
    &= \sum_{i=1}^{d}\frac{2}{\lambda_{i}(\mu + L -\lambda_{i})}\frac{1}{2}\left(\sqrt{\kappa}+\frac{1}{\sqrt{\kappa}}\right)\\
    &= \sum_{i=1}^{d}\frac{2}{\lambda_{i}(\mu + L -\lambda_{i})}\mathcal{O}(\sqrt{\kappa})\\
    &= \mbox{Tr}(X_{\beta=0})\mathcal{O}(\sqrt{\kappa})
\end{align*}
The above calculation shows that the size of the asymptotic covariance matrix of SHB is worse by a factor of $\mathcal{O}(\sqrt{\kappa})$.

\subsection{Comparison with Assran and Rabbat (2020)}
\label{App_A4}
Theorem 1 of \cite{Assran} is stated for a constant $\epsilon$ (not to be confused with $\epsilon$ in the current paper) 
and uses $C_{\epsilon}$ (a term that depends on $\epsilon$). However in Corollary 1.1, the result is stated for a decreasing sequence $\epsilon_k$ for which $C_{\epsilon}$ is undefined (equation (26)).

Using the expressions provided in Appendix of \cite{Assran}, we find that the second term in Corollary 1.1 (variance bound) is in fact of order $Q^{3/2}$ instead of $Q^{1/2}$. Using equation (29) and the display below it, we see that for ASG, $\| A^k \| \leq k \big(1- \frac{1}{\sqrt{Q}}\big)^{k+1}$. 
It follows that $$\sum_{k=0}^{n-1} \| A^{k}\|^2 \leq \sum_{k=0}^{n-1} k^2 \Big(1-\frac{1}{\sqrt{Q}}\Big)^{2k+2} = O\Bigg(\frac{1}{\big(1-(1-\frac{1}{\sqrt{Q}})^2\big)^{3}}\Bigg) = O(Q^{3/2}).$$ 

However, for SGD, $$\sum_{k=0}^{n-1} \| A^{k}\|^2 \leq \sum_{k=0}^{n-1} \Big(1-\frac{2}{\sqrt{Q-1}}\Big)^{2k} = O\Bigg(\frac{1}{1-(1-\frac{2}{Q-1})^2} \Bigg) = O(Q).$$

As a result, the variance bound for ASG is $O(\sigma^2 Q^{3/2})$, while the variance bound for SGD is $O(\sigma^2 Q)$. Though the bias term in their result improves by a factor of $\sqrt{Q}$, the variance term also worsens by a factor of $\sqrt{Q}$.

\section{Proof of Theorem \ref{Lower_bound}}
\label{App_thm3}
% As in proof of Theorem \ref{Theorem_OTS_mom}, we have:
% \begin{gather*}
%     \Tilde{X}_{n} = P \Tilde{X}_{n-1} + \alpha W_{n}
%     = P^n \Tilde{X}_{0} + \alpha \sum_{j=0}^{n-1}P^{n-1-j}W_{j+1},
% \end{gather*}
% where,
% \[\Tilde{X}_{n} \triangleq
% \begin{bmatrix}
%     \Tilde{x}_{n} \\
%     \Tilde{x}_{n-1}      
% \end{bmatrix}
% , P \triangleq
% \begin{bmatrix}
%     1 - \alpha \lambda + \eta  & -\eta    \\
%     1 & 0      
% \end{bmatrix} 
% \mbox{ and } W_{n} \triangleq
% \begin{bmatrix}
%     M_{n}\\
%     0
% \end{bmatrix}.
% \]
% We prove the claim by showing that the expected error $\mathbb{E}[\|\tilde{X}_n\|^2]>\epsilon$ for some $n \in \tilde{\Theta}(\frac{K}{\epsilon \lambda^2})$. We show that this holds for small enough $\epsilon$ and all choices of $\alpha>0$ and $\eta \in [0,1]$.
% We separately handle the case when $\eta \in \Bigg[0,\left(1-\sqrt{\alpha\lambda}\right)^2\Bigg)$, $\eta = \left(1-\sqrt{\alpha\lambda}\right)^2$ and $\eta \in \Bigg(\left(1-\sqrt{\alpha\lambda}\right)^2,1\Bigg]$.

% \textbf{Case-1:} $\eta \in \Bigg(\left(1-\sqrt{\alpha\lambda}\right)^2,1\Bigg]$. 
Observe that
\begin{align*}
    \|P^{n}\tilde{X}_{0}\|^2 &= \|P^{n}\|^2\|\tilde{X}_{0}\|^2 \mbox{ for some } \tilde{X}_0 \\
    &\geq \rho(P)^{2n} \|\tilde{X}_0\|^2\\
\end{align*}
Choose such an $\tilde{X}_{0}$ with $\|\tilde{X}_{0}\|\geq1$. If $\|P^n \tilde{X}_{0}\|^2 > \epsilon$, $\mathbb{E}[\|\tilde{X}_{n}\|^2] > \epsilon$. Therefore we assume $\|P^n \tilde{X}_{0}\|^2 \leq \epsilon$ which implies $\rho(P)^{2n} \leq \epsilon$ and $\|P^n e_{1}\|^2 \leq \epsilon$.
\begin{customlemma}{1}
For all $\alpha>0$, $\eta \in [0,(1-\sqrt{\alpha \lambda})^2)$, if $\|P^n \tilde{X_0} \|^2 < \epsilon$ for small enough $\epsilon>0$ then the variance term $\alpha^2 K \sum_{j=0}^{n-1} \Big\|P^{j}e_{1}\Big\|^2$ is bounded by $$\alpha^2 K \sum_{j=0}^{n-1} \Big\|P^{j}e_{1}\Big\|^2 \geq \frac{\alpha^2 K}{2(1-\mu_+^2)(1-\mu_-^2)(1-\eta)}.$$
\end{customlemma}
\begin{proof}
We see that
\begin{align*}
    \sum_{j=0}^{n-1} \Big\|P^{j}e_{1}\Big\|^2 &\geq \sum_{j=0}^{n-1} \left(\frac{\mu_+^j-\mu_-^j}{\mu_+-\mu_-}\right)^2 \\
    &= \frac{1}{(\mu_+-\mu_-)^2} \sum_{j=0}^{n-1}(\mu_+^{2j}+\mu_-^{2j}-2(\mu_+\mu_-)^{j})\\
    &= \frac{1}{(\mu_+-\mu_-)^2} \left(\sum_{j=0}^{n-1} \mu_+^{2j}+ \sum_{j=0}^{n-1} \mu_-^{2j}-2 \sum_{j=0}^{n-1} (\mu_+\mu_-)^{j}\right) \\
    &= \frac{1}{(\mu_+-\mu_-)^2} \left( \frac{1-\mu_+^{2n}}{1-\mu_+^2}+ \frac{1-\mu_-^{2n}}{1-\mu_-^2}-2 \frac{1-(\mu_+\mu_-)^{n}}{1-\mu_+\mu_-}\right) 
\end{align*}

The sum of fractions in the bracket can be expressed as a single fraction with denominator $(1-\mu_+^2)(1-\mu_-^2)(1-\mu_+\mu_-)$. The numerator turns out to be:
\begin{gather*}
    num = (\mu_+-\mu_-)^2+\mu_+\mu_-(\mu_+-\mu_-)^2+\mu_+\mu_-(\mu_+^n-\mu_-^n)^{2}+(\mu_+\mu_-)^2(\mu_+^{n-1}-\mu_-^{n-1})^{2} - \\(\mu_+\mu_-)^3(\mu_+^{n-1}-\mu_-^{n-1})^2 - 2(\mu_+\mu_-)^n(\mu_+-\mu_-)^2-(\mu_+^n-\mu_-^n)^2.
\end{gather*}

First consider the case $\eta \in [0, (1-\sqrt{\alpha\lambda})^2))$. The eigen values $\mu_{+}$ and $\mu_{-}$ in this case are real. Therefore all the terms in the numerator and the denominator are real.
Next consider the case $\eta \in ((1-\sqrt{\alpha\lambda})^2,1]$. In this case the eigen values $\mu_+$ and $\mu_-$ are complex. Let $\mu_+ = \rho(P)(\cos\omega + i\sin\omega)$ and $\mu_- = \rho(P)(\cos\omega - i\sin\omega)$. Observe that 
\begin{align*}
    (\mu_+ - \mu_-)^2 &= (\rho(P)(\cos\omega + i\sin\omega) - \rho(P)(\cos\omega - i\sin\omega))^2 = -4\rho(P)^2\sin^2\omega,\\
    (\mu_+^m - \mu_-^m)^2 &= -4\rho(P)^{2m}\sin^2 m \omega,\\
    \mu_+\mu_- &= \rho(P)^2(\sin^2\omega + \cos^2\omega)=\rho(P)^2 = \eta.
\end{align*}
Therefore, all the terms in the numerator and the denominator are again real. Now observe that,
\begin{align*}
    &\mu_+\mu_-(\mu_+-\mu_-)^2 \geq 0,\\
   % & \mu_+\mu_-(\mu_+^n-\mu_-^n)^{2} \geq 0,\\
    & (\mu_+\mu_-)^2(\mu_+^{n-1}-\mu_-^{n-1})^{2} -
    (\mu_+\mu_-)^3(\mu_+^{n-1}-\mu_-^{n-1})^2 \geq 0.
\end{align*}

The second inequality follows because $\mu_+ \mu_- = \frac{(1-\alpha \lambda +\eta)^2-((1-\alpha \lambda +\eta)^2-4\eta)}{4}=\eta$ and therefore $(\mu_+\mu_-)^2 \geq (\mu_+\mu_-)^3$. Next, 
% \begin{align*}
%     2(\mu_+\mu_-)^n(\mu_+-\mu_-)^2 &\leq 2(\mu_+)^{2n}(\mu_+-\mu_-)^2\\
%     & = 2(\rho(P))^{2n}(\mu_+-\mu_-)^2\\ &\leq 2\epsilon(\mu_+-\mu_-)^2
% \end{align*}
\begin{align*}
     \mu_+\mu_-(\mu_+^n-\mu_-^n)^{2} &= \mu_+\mu_-(\mu_+-\mu_-)^2\Big(\sum_{i=0}^{n-1}\mu_+^{i}\mu_-^{n-1-i} \Big)^{2}\\
    & \geq \mu_+\mu_-(\mu_+-\mu_-)^2(\mu_+^{n-1}+\mu_-^{n-1} )^{2} \\
    & \geq 4(\mu_+\mu_-)^{n}(\mu_+-\mu_-)^2 \\
\end{align*}
%The last inequality follows because $\rho(P)^{2n} \leq \|P^ne_1 \|^2 < \epsilon$. 
The last inequality follows since $\mu_+^{n-1}+\mu_-^{n-1} \geq 2 (\mu_+\mu_-)^{\frac{n-1}{2}}$. It follows that $\mu_+\mu_-(\mu_+^n-\mu_-^n)^{2}-2(\mu_+\mu_-)^n(\mu_+-\mu_-)^2 \geq 0$. Also, note that $ \left(\frac{\mu_+^n-\mu_-^n}{\mu_+-\mu_-}\right)^2 \leq \|P^ne_1 \|^2 <\epsilon$ and therefore $(\mu_+^n-\mu_-^n)^2 \leq \epsilon (\mu_+-\mu_-)^2$ .
It follows that $num \geq (1-\epsilon)(\mu_+-\mu_-)^2$. Using these bounds, taking $\epsilon<=1/2$ and noting that $\displaystyle h(\eta, \alpha \lambda):= \frac{(1-\mu_+^2)(1-\mu_-^2)(1-\mu_+\mu_-)}{(\alpha \lambda)^2}$, we get
\begin{align}
    \alpha^2 K \sum_{j=0}^{n-1} \Big\|P^{j}e_{1}\Big\|^2
    & \geq \frac{\alpha^2K}{(\mu_+-\mu_-)^2} \left( \frac{1-\mu_+^{2j}}{1-\mu_+^2}+ \frac{1-\mu_-^{2j}}{1-\mu_-^2}-2\nonumber \frac{1-(\mu_+\mu_-)^{j}}{1-\mu_+\mu_-}\right)\\
    \label{var_eq}
    & \geq\frac{\alpha^2K}{2(1-\mu_+^2)(1-\mu_-^2)(1-\mu_+\mu_-)}.
\end{align}
Finally, consider the case when $\eta = (1 - \sqrt{\alpha\lambda})^2$. In this case $\mu_+ = \mu_- = \mu$.

Let $P=S^{-1}JS$, where $J=\begin{bmatrix}
    \mu & 1 \\
    0 & \mu
\end{bmatrix}.$ Let $S=\begin{bmatrix}
    x & y \\
    w & z
\end{bmatrix}$. Since $\eta = (1-\alpha \lambda)^2$, we get $\mu = \sqrt{\eta} = (1-\alpha \lambda)$ and $1-\alpha \lambda + \eta = 2 \sqrt{\eta}$. We now solve for $S$. 

$$\begin{bmatrix}
    x & y \\
    w & z
\end{bmatrix} \begin{bmatrix}
    1-\alpha \lambda +\eta & -\eta \\
    1 & 0
\end{bmatrix} = \begin{bmatrix}
    \mu & 1 \\
    0 & \mu
\end{bmatrix} \begin{bmatrix}
    x & y \\
    w & z
\end{bmatrix}.$$
It follows that the following equations hold
\begin{align*}
    x(1-\alpha \lambda + \eta)+y&=\mu x +w \\
    w(1-\alpha \lambda + \eta)+z&=\mu w\\
    \eta x &= \mu y +z \\
    \eta w &=\mu z. 
\end{align*}
Solving the above equations with $w=1$, we get,
\[
S = \begin{bmatrix}
    0 & 1\\
    1 & -\sqrt{\eta}
\end{bmatrix}
\]
%%%%%%%%%%%%%%%%%%%%%%%%%%%%%%%%%%%%%%%%%%%%%%%%%%%%%%%%%%%%%%
\begin{align*}
P^j &= S^{-1} J^j S\\
&= \begin{bmatrix}
    \sqrt{\eta} & 1 \\
    1 & 0
\end{bmatrix}
\begin{bmatrix}
    \mu^j & n\mu^{j-1} \\
    0 & \mu^j
\end{bmatrix}
\begin{bmatrix}
    0 & 1\\
    1 & -\sqrt{\eta}
\end{bmatrix}\\
& = \begin{bmatrix}
    \mu^{j+1} & (j+1)\mu^j\\
    \mu^{n} & j \mu^{j-1}
\end{bmatrix}
\begin{bmatrix}
    0 & 1\\
    1 & -\mu
\end{bmatrix}\\
&= \begin{bmatrix}
    (j+1)\mu^j & -j\mu^{j+1}\\
    j \mu^{j-1} & -(j-1) \mu^j
\end{bmatrix}
\end{align*}
Observe that,
\begin{align*}
    \sum_{j=0}^{n-1} \|P^{j}e_{1}\|^2 &= \sum_{j=0}^{n-1} (j+1)^2\mu^2j + j^2\mu^{2(j-1)} \geq  \sum_{j=0}^{n-1} (j\mu^{j-1})^2\\
    & = \lim_{\mu_+ \rightarrow \mu_-}  \sum_{j=0}^{n-1}\left(\frac{\mu_+^j - \mu_-^j}{\mu_+-\mu_-}\right)^2 \\
    & \geq  \lim_{\mu_+ \rightarrow \mu_-} \frac{1}{2(1-\mu_+^2)(1-\mu_-^2)(1-\mu_+\mu_-)},
\end{align*}
where the last inequality follows from \eqref{var_eq}. Therefore,
$$\alpha^2 K \sum_{j=0}^{n-1} \Big\|P^{j}e_{1}\Big\|^2 \geq \frac{\alpha^2 K}{2(1-\mu^2)(1-\mu^2)(1-\eta)}.$$
\end{proof}
\begin{customlemma}{2}
For all $\alpha >0$, $\eta \in [0,(1-\sqrt{\alpha \lambda})^2)$,  if $\|P^n \tilde{X_0} \|^2 < \epsilon$ for small enough $\epsilon>0$, then $\alpha^2 K \sum_{j=0}^{n-1} \Big\|P^{j}e_{1}\Big\|^2 \geq \frac{K}{16 \lambda^2}(1-\rho(P)).$
\end{customlemma}
\begin{proof}
\label{C4}
Before proceeding to the proof, we introduce a new function 
\begin{align*}
\label{h-def}
    h(\eta, \alpha \lambda) \triangleq \frac{(1-\mu_+^2)(1-\mu_-^2)(1-\eta)}{(\alpha \lambda)^2}    
\end{align*}
to simplify calculations. 
First we consider the case $\eta \in [0,(1-\sqrt{\alpha\lambda})^2]$ when $\mu_+$ and $\mu_-$ are real.
Since $\mu_+$ and $\mu_-$ are only functions of $\alpha \lambda$ and $\eta$, $h$ is well-defined.

The proof of the Lemma follows by showing $h(\eta, \alpha \lambda)(1-\rho(P)) \leq 8$. Note that
$1-\mu_+^2 = (1-\mu_+)(1+\mu_+) \leq 2(1-\mu_+)$ since $\mu_+ \leq 1$. Similarly, $(1-\mu_-^2) \leq 2(1-\mu_-)$. Thus, $h(\eta, \alpha \lambda)(1-\mu_+) \leq \frac{4 (1-\mu_+)^2(1-\mu_-)(1-\mu_+\mu_-)}{(\alpha \lambda)^2}$. 
Since $\mu_+ + \mu_- = 1-\alpha \lambda + \eta$, it follows that $(1-\mu_+)(1-\mu_-)=1+\mu_ +  \mu_- -\mu_+ -\mu_-= 1-(1-\alpha \lambda + \eta)+\eta=\alpha\lambda$. Thus,
\begin{align*}
    h(\eta, \alpha \lambda)(1-\mu_+) &\leq \frac{4(1-\mu_+)(1-\eta)}{\alpha\lambda}\\
    & = \frac{4(1-\eta)}{\alpha\lambda}\Bigg(1-\frac{( 1 + \eta - \lambda \alpha) + \sqrt{(\lambda \alpha - 1 - \eta)^2 - 4\eta}}{2}\Bigg) \triangleq g(\eta, \alpha\lambda).
\end{align*}

We see that 
\begin{align*}
    \frac{\partial g(\eta, \alpha\lambda)}{\partial \eta} = \frac{2\Big(\sqrt{(1-\alpha\lambda+\eta)^2 - 4\eta}+\eta-1\Big)\Big(\sqrt{(1-\alpha\lambda+\eta)^2 - 4\eta}+\eta-\alpha\lambda-1\Big)}{\alpha\lambda\sqrt{(1-\alpha\lambda+\eta)^2 - 4\eta}}
\end{align*}

Observe that the denominator in the above expression is positive. We consider the following two cases:  (i) $\alpha\lambda \geq 1$ and (ii) $\alpha\lambda < 1$. When $\alpha\lambda \geq 1$, we can directly bound $g(\eta,\alpha\lambda)=\frac{4(1-\mu_+)(1-\eta)}{\alpha\lambda}$. Since $\mu_+ \geq -1$ and $\eta \geq 0$, we get $g(\eta,\alpha\lambda) \leq 8$.

Now consider the case  $\alpha\lambda < 1$. We have that, %$(\sqrt{(1-\alpha\lambda+\eta)^2 - 4\eta}+\eta-1)$. 
\begin{align*}
    \sqrt{(1-\alpha\lambda+\eta)^2 - 4\eta}+\eta-1 &= \sqrt{(1-\eta)^2+(\alpha\lambda)^2 - 2\alpha\lambda(1+\eta)} - (1-\eta)
\end{align*}
When $\alpha\lambda < 1$, $(\alpha\lambda)^2 - 2\alpha\lambda(1+\eta)<0$ for all $\eta \geq 0$, thus $\Big(\sqrt{(1-\alpha\lambda+\eta)^2 - 4\eta}+\eta-\alpha\lambda-1\Big) \leq \Big(\sqrt{(1-\alpha\lambda+\eta)^2 - 4\eta}+\eta-1\Big) < 0$. This implies that the numerator of the partial derivative is also positive and we have that $\displaystyle\frac{\partial g(\eta, \alpha\lambda)}{\partial \eta} > 0$. Since the partial derivative is positive $g(\eta, \alpha\lambda)$ is an increasing function of $\eta$ and thus the maximum is achieved at $\eta=(1-\sqrt{\alpha \lambda})^2$ and is given by:
\begin{align*}
    g(\eta, \alpha\lambda) &\leq \frac{4(1-\mu_+)(1-\eta)}{\alpha\lambda}\\
    & = \frac{4(1-\rho(P))(1-\eta)}{\alpha\lambda}\\
    & = \frac{4(1-(1-\sqrt{\alpha\lambda}))(1-\eta)}{\alpha\lambda}\\
    & = \frac{4\sqrt{\alpha\lambda}(2\sqrt{\alpha\lambda} - \alpha\lambda)}{\alpha\lambda}\\
    &\leq 8
\end{align*}
%\mu_+ = \frac{-(\lambda \alpha - 1 - \eta) + \sqrt{(\lambda \alpha - 1 - \eta)^2 - 4\eta}}{2}
% Next we consider the case $\eta = (1-\sqrt{\alpha\lambda})^2$. We have,
% \begin{align*}
%     h(\eta, \alpha\lambda) &\leq \frac{4(1-\mu_+)(1-\eta)}{\alpha\lambda}\\
%     & = \frac{4(1-\rho(P))(1-\eta)}{\alpha\lambda}\\
%     & = \frac{4(1-(1-\sqrt{\alpha\lambda}))(1-\eta)}{\alpha\lambda}\\
%     & = \frac{4\sqrt{\alpha\lambda}(2\sqrt{\alpha\lambda} - \alpha\lambda)}{\alpha\lambda}\\
%     &\leq 8
% \end{align*}
Next, for $\eta = ((1-\sqrt{\alpha\lambda})^2,1]$, the eigen values $\mu_+$ and $\mu_-$ are complex. We have,
\begin{align*}
    h(\eta,\alpha\lambda) = \frac{(1-\mu_+^2)(1-\mu_-^2)(1-\eta)}{(\alpha\lambda)^2}.
\end{align*}
First we show that $(1-\mu_+^2)(1-\mu_-^2) \leq 4 (1-\mu_+)(1-\mu_-)$. Notice,
\begin{align*}
    (1-\mu_+^2)(1-\mu_-^2) &= (1-\mu_+)(1+\mu_+)(1-\mu_-)(1+\mu_-)\\
    & = (1 - \mu_+)(1 - \mu_)(1+ \mu_+ + \mu_- + \mu_+\mu_-)\\
    & = (1 - \mu_+)(1 - \mu_-)(1 + (1-\alpha\lambda+\eta) + \eta) \\
    & \leq 4 (1 - \mu_+)(1 - \mu_-).
\end{align*}
It follows that, 
\begin{align*}
    h(\eta,\alpha\lambda)(1-\rho(P)) &\leq \frac{4(1-\rho(P))(1-\eta)}{\alpha\lambda}\\
    &= \frac{4(1-\sqrt{\eta})(1-\eta)}{\alpha\lambda} \triangleq l(\eta,\alpha\lambda)
\end{align*}
Observe that $l(\eta,\alpha\lambda)$ is a decreasing function of $\eta$ and therefore, the infimum is attained for $\eta = (1-\sqrt{\alpha\lambda})^2$. For this choice of $\eta$,
\begin{align*}
    l(\eta,\alpha\lambda) &= \frac{4(1 - (1-\sqrt{\alpha\lambda}))(1-(1-\sqrt{\alpha\lambda})^2)}{\alpha\lambda}
    \leq 8
\end{align*}
\end{proof}
\begin{customlemma}{3}
 If $n=\frac{K}{64\epsilon \lambda^2}\log(\frac{\|\tilde{X_0}\|^2}{\epsilon})$ and $\|P^n\tilde{X}_{0} \|^2 < \epsilon$, then $1-\mu_+ \geq \frac{16\epsilon \lambda^2}{K}$.
\end{customlemma}
\begin{proof}
From $\|P^n\tilde{X}_0\| \leq \epsilon$ we have $\rho(P)^{2n} \leq \epsilon$. 
Then,
\begin{gather*}
     \epsilon \geq \rho(P)^{2n} 
     \geq e^{-2n\frac{1-\rho(P)}{\rho(P)}}
\end{gather*}
Thus, we have $e^{-2n\frac{1-\rho(P)}{\rho(P)}}<\epsilon$ and $n = \frac{K}{64 \epsilon \lambda^2}\log(\frac{1}{\epsilon}) \geq \frac{\rho(P)}{2(1-\rho(P))} \log(\frac{1}{\epsilon})$. Thus, by re-arranging, and choosing $\epsilon$ such that $1 \leq \frac{K}{32\epsilon \lambda^2}$, we have

\begin{align}
    \frac{1}{1-\rho(P)} \leq \frac{K}{32\epsilon \lambda^2}+1 \leq \frac{K}{16 \epsilon \lambda^2}.
\end{align}

\end{proof}

\section{Proof of Theorem \ref{Theorem_OTS_mom}}
\label{App_thm1}
We prove the theorem separately for $\eta = 0$ (corresponding to SGD), $\beta = 0$ (corresponding to SHB) and $\beta = 1$ (corresponding to ASG).

\textbf{Case-1: $\eta = 0$ (SGD)}

The LSA-M iterate in \eqref{sgd-m} with $\eta = 0$ corresponds to:
\begin{gather}
    \label{transformedSA}
    x_{n+1} - x^* = x_{n}-x^* + \alpha(Ax^* - Ax_{n}+M_{n+1})\\
    \label{transformedSAMom}
    x_{n+1} - x^* = x_{n}-x^* + \alpha(Ax^* - Ax_{n}+M_{n+1}) + \eta((x_{n}-x^{*}) - (x_{n-1}-x^{*}))
\end{gather}
%\subsection{Proof of Theorem \ref{Theorem_OTS}}
Let $\tilde{x}_{n} = x_{n} - x^{*}$. Then, equation \eqref{transformedSA} can be rewritten as:
\begin{equation*}
    \begin{split}
        \Tilde{x}_{n} &= \Tilde{x}_{n-1} + \alpha(-A\Tilde{x}_{n-1} + M_{n})
        = (I - \alpha A)\Tilde{x}_{n-1} + \alpha M_{n}\\
        &= (I - \alpha A)^{n}\Tilde{x}_{0} + \alpha\sum_{i=0}^{n-1}[(I-\alpha A)^{n-1-i} M_{i+1}]
    \end{split}
\end{equation*}
Taking the square of the norm on both sides of the above equation, we obtain
\begin{equation*}
    \begin{split}
        \|\Tilde{x}_{n}\|^2 &= \|(I-\alpha A)^n\Tilde{x}_{0}\|^{2} + 2\alpha\left((I-\alpha A)^{n}\Tilde{x}_{0}\right)^T\left(\sum_{i=0}^{n-1}(I-\alpha A)^{n-1-i}M_{i+1}\right)\\
        & + \alpha^2 \sum_{i=0}^{n-1}\sum_{j=0}^{n-1}((I-\alpha A)^{n-1-i} M_{i+1})^T ((I-\alpha A)^{(n-1-j)} M_{j+1}))
    \end{split}
\end{equation*}
Now we take expectation on both sides to obtain
\begin{equation*}
    \begin{split}
        \mathbb{E}[\|\Tilde{x}_{n}\|^2] 
        &\leq \|(I-\alpha A)^n\|^{2} \|\Tilde{x_{0}}\|^2 + 2\alpha\left((I-\alpha A)^{n}\Tilde{x}_{0}\right)^T\left(\sum_{i=0}^{n}(I-\alpha A)^{(n-1-i)}\mathbb{E}[M_{i+1}]\right)\\
        & + \alpha^2 \sum_{i=0}^{n-1}\sum_{j=0}^{n-1}\mathbb{E}((I-\alpha A)^{(n-1-i)} M_{i+1})^T ((I-\alpha A)^{(n-1-j)} M_{j+1}))
    \end{split}
\end{equation*}
Now, from \textbf{Assumption \ref{A2}}, $\mathbb{E}[M_{i+1}] = \mathbb{E}[\mathbb{E}[M_{i+1}|\mathcal{F}_{i}]] = 0.$
Therefore the second term becomes 0. Next consider the term inside the double summation. First consider the case $i\neq j$. Without loss of generality, suppose $i<j$.
\begin{equation*}
    \begin{split}
        & \mathbb{E}\left[M_{i+1}^{T}\left((I-\alpha A)^{(n-1-i)}\right)^T (I-\alpha A)^{(n-1-j)} M_{j+1}\right]\\
        & =\mathbb{E}\left[\mathbb{E}\left[M_{i+1}^{T}\left((I-\alpha A)^{(n-1-i)}\right)^T (I-\alpha A)^{(n-1-j)} M_{j+1}\vert\mathcal{F}_{j}\right]\right]\\
        & = \mathbb{E}\left[M_{i+1}^{T}\left((I-\alpha A)^{(n-1-i)}\right)^{T}(I-\alpha A)^{(n-1-j)}\mathbb{E}[M_{j+1}|\mathcal{F}_{j}]\right]
        = 0
    \end{split}
\end{equation*}
The last equality follows from \textbf{Assumption \ref{A2}}. When $i=j$,
\begin{equation*}
    \begin{split}
        & \mathbb{E}\left[M_{i+1}^{T}\left((I-\alpha A)^{(n-1-i)}\right)^T (I-\alpha A)^{(n-1-i)} M_{i+1}\right]\\
        %& \leq \mathbb{E}\left[\mathbb{E}\left[M_{i+1}^{T}\left(||(I-\alpha A)^{(n-1-i)}||^{2}\right) M_{i+1}\vert\mathcal{F}_{i}\right]\right]\\
        & \leq \mathbb{E}\left[\|(I-\alpha A)^{(n-1-i)}\|^{2}\mathbb{E}\left[\|M_{i+1}\|^{2}|\mathcal{F}_{i}\right]\right]
         \leq \|(I-\alpha A)^{(n-1-i)}\|^{2} K\left(1 + \mathbb{E}[\|\Tilde{x}_{i}\|^{2}]\right)
    \end{split}
\end{equation*}
Substituting the above values and using $\Lambda =||x_{0} - x^*||^{2}$ we get
\begin{equation*}
    \begin{split}
        \mathbb{E}[\|\Tilde{x_n}\|^2] &\leq \|(I-\alpha A)^{n}\|^2\Lambda + \alpha^2 K\sum_{i=0}^{n-1}\|(I-\alpha A)^{(n-1-i)}\|^2 (1 + \mathbb{E}[\|\Tilde{x}_{i}\|^2])
    \end{split}
\end{equation*}
We next use the following lemma to bound $\|(I-\alpha A)^i\|$.
\begin{customlemma}{4}
 Let, $M \in \mathbb{R}^{d\times d}$ be a matrix and $\lambda_{i}(M)$ denote the $i^{th}$ eigen-value of $M$. Then, $\forall \delta > 0$
\[\|M^{n}\|\leq C_{\delta}(\rho(M) + \delta)^n\]
where $\rho(M) = \max_{i}|\lambda_{i}(M)|$ is the spectral radius of $M$ and $C_{\delta}$ is a constant that depends on $\delta$. Furthermore, if the eigen-values of $M$ are distinct, then
\[\|M^{n}\|\leq C(\rho(M))^n\]
\end{customlemma}
\begin{proof}
See Appendix \ref{Lemma5_proof}.
\end{proof}
We let $\lambda_{\min}(A) = \min_{i}\lambda_{i}(A)$. Using \textbf{Assumption \ref{A1}} and Lemma \ref{norm_upper_bound}, we have
\begin{equation*}
    \begin{split}
        \mathbb{E}[\|\Tilde{x}_{n}\|^2] &\leq C^2(1-\alpha\lambda_{\min}(A))^{2n}\Lambda + C^2\alpha^2K\sum_{i=0}^{n-1}(1-\alpha\lambda_{\min}(A))^{2(n-1-i)}(1 + \mathbb{E}[\|\Tilde{x}_{i}\|^2])
    \end{split}
\end{equation*}
\begin{equation}
    \label{C_thm1}
    \mbox{where, \quad } C = \frac{\sqrt{d}}{\sigma_{\min}(S)\sigma_{\min}(S^{-1})},
\end{equation}
$S$ is the matrix in Jordan decomposition of $A$ and $\sigma_{\min}(S)$ is the smallest singular value of $S$. 
We define the sequence $\{U_{k}\}$ as below:
\[U_{k} = C^2(1-\alpha\lambda_{\min}(A))^{2k}\Lambda + C^2\alpha^2K\sum_{i=0}^{k-1}(1-\alpha\lambda_{\min}(A))^{2(k-1-i)}(1 + U_{i})\]
Observe that $\mathbb{E}[\|\Tilde{x}_{n}\|^2] \leq U_{n}$ and that the sequence $\{U_{k}\}$ satisfies
\begin{gather*}
    U_{k+1} = (1-\alpha\lambda_{\min}(A))^{2}U_{k} + C^{2}K\alpha^{2}(1 + U_{k});\quad
    U_{0} = C^{2}\Lambda
\end{gather*}
Therefore, we have 
\[U_{k+1} = \left((1-\alpha\lambda_{\min}(A))^{2} + C^{2}K\alpha^{2}\right)U_{k} + \alpha^{2}C^{2}K\]
To ensure that $(1-\alpha\lambda_{\min}(A))^{2} + C^{2}K\alpha^{2} \leq (1-\alpha\lambda_{\min}(A)/2)^2$, choose $\alpha$ as follows:
\begin{gather*}
    \alpha^{2}\lambda_{\min}(A)^{2} - 2\alpha\lambda_{\min}(A) + C^{2}K\alpha^{2} \leq \frac{\alpha^{2}\lambda_{\min}(A)^{2}}{4} - \alpha\lambda_{\min}(A)\\
    \mbox{ or } \alpha \leq \frac{\lambda_{\min}(A)}{\frac{3}{4}\lambda_{\min}(A)^{2} + C^{2}K}
\end{gather*}
\begin{equation*}
    \begin{split}
        U_{n} & \leq \left(1 - \frac{\alpha\lambda_{\min}(A)}{2}\right)^{2}U_{n-1} + \alpha^{2}C^{2}K \\
        & \leq \left(1 - \frac{\alpha\lambda_{\min}(A)}{2}\right)^{2n}U_{0} + \alpha^{2}C^{2}K\sum_{i=0}^{n-1} \left(1 - \frac{\alpha\lambda_{\min}(A)}{2}\right)^{2i}\\
        & \leq  \left(1 - \frac{\alpha\lambda_{\min}(A)}{2}\right)^{2n}U_{0} + \alpha^{2}C^{2}K \frac{1}{1-\left(1-\frac{\alpha\lambda_{min}(A)}{2}\right)^{2}}\\
        & \leq \left(1 - \frac{\alpha\lambda_{\min}(A)}{2}\right)^{2n}U_{0} + \alpha^{2}C^{2}K \frac{2}{\alpha\lambda_{min}(A)}\\
    \end{split}
\end{equation*}
We assume that $\alpha \leq \frac{1}{\lambda_{\min}(A)}$ and therefore $(1-\frac{\alpha\lambda_{\min}(A)}{2})^{2} \leq e^{-\alpha\lambda_{\min}(A)}$.
\begin{equation*}
    \begin{split}
        U_{n}\leq e^{-n\alpha\lambda_{\min}(A)}C^{2}\Lambda + \frac{2\alpha C^{2}K}{\lambda_{min}(A)}
    \end{split}
\end{equation*}
Choose $\alpha$ as below:
\[\alpha \leq \frac{\epsilon\lambda_{min}(A)}{4C^2K}\]
Then,
\[\frac{2\alpha C^{2}K}{\lambda_{min}(A)} \leq \frac{\epsilon}{2}
\Rightarrow 
\mathbb{E}[\|\Tilde{x}_{n}\|^2] \leq U_{n} \leq \frac{\epsilon}{2} + \frac{\epsilon}{2} = \epsilon,\]
when the sample complexity is:
\[n = \frac{1}{\alpha\lambda_{\min}(A)}\log\left(\frac{2C^2 \Lambda}{\epsilon}\right)\]

\textbf{Case-2: $\beta = 0$ (SHB)}

% \subsection{Proof of Theorem \ref{Theorem_OTS_mom}}
\label{App_thm2}
LSA-M iterate with $\beta = 0$ can be re-written as:
\[\Tilde{x}_{n+1} = (I-\alpha A)\Tilde{x}_{n} + \alpha(M_{n+1}) + \eta(\Tilde{x}_{n} - \Tilde{x}_{n-1})\]
This can be re-written as:
\[
\begin{bmatrix}
    \Tilde{x}_{n+1} \\
    \Tilde{x}_{n}      
\end{bmatrix}
=
\begin{bmatrix}
    I - \alpha A + \eta I & -\eta I   \\
    I & 0      
\end{bmatrix} 
\begin{bmatrix}
    \Tilde{x}_{n} \\
    \Tilde{x}_{n-1}      
\end{bmatrix}
+ \alpha
\begin{bmatrix}
    M_{n+1}\\
    0      
\end{bmatrix}
\]
Let us define 
\[\Tilde{X}_{n} \triangleq
\begin{bmatrix}
    \Tilde{x}_{n} \\
    \Tilde{x}_{n-1}      
\end{bmatrix}
, P \triangleq
\begin{bmatrix}
    I - \alpha A + \eta I & -\eta I   \\
    I & 0      
\end{bmatrix} 
\mbox{ and } W_{n} \triangleq
\begin{bmatrix}
    M_{n}\\
    0
\end{bmatrix}.
\]
Note that $\mathbb{E}[W_{n+1}|\mathcal{F}_{n}] = 0$ and $\mathbb{E}[\|W_{n+1}\|^{2}|\mathcal{F}_{n}] = \mathbb{E}[\|M_{n+1}\|^{2}|\mathcal{F}_{n}] \leq K(1 + \mathbb{E}[\|\Tilde{x}_{n}\|^{2}]) \leq K(1+ \mathbb{E}[\|\Tilde{X}_{n}\|^{2}])$. It follows that,
\begin{gather*}
    \Tilde{X}_{n} = P \Tilde{X}_{n-1} + \alpha W_{n}
    = P^n \Tilde{X}_{0} + \alpha \sum_{i=0}^{n-1}P^{n-1-i}W_{i+1}
\end{gather*}
The norm square of the above equation gives:
\begin{equation*}
    \begin{split}
        \|\Tilde{X}_{n}\|^{2} & = \|P^{n}\Tilde{X}_{0}\|^{2} + \alpha \left(P^{n}\Tilde{X}_{0}\right)^{T}\left(\sum_{i=0}^{n-1}P^{(n-1-i)}W_{i+1}\right) + \alpha\left(\sum_{i=0}^{n-1}P^{(n-1-i)}W_{i+1}\right)^{T}\left(P^{n}\Tilde{X}_{0}\right)\\
        & + \alpha^{2} \left(\sum_{i=0}^{n-1}P^{(n-1-i)}W_{i+1}\right)^{T}\left(\sum_{i=0}^{n-1}P^{(n-1-i)}W_{i+1}\right)
    \end{split}
\end{equation*}
Taking expectation on both sides as well as using the fact that $\mathbb{E}[W_{k+1}|\mathcal{F}_{k}] = 0$ and $\mathbb{E}[\|W_{k+1}\|^{2}|\mathcal{F}_{k}] \leq K(1 + \mathbb{E}[\|\Tilde{X}_{k}\|^{2}])$, we have
\[
\mathbb{E}[\|\Tilde{X}_{n}\|^2] \leq \|P^n\|^2 \|\Tilde{X}_{0}\|^2 + \alpha^2K\sum_{i=0}^{n-1}\|P^{n-1-i}\|^2(1 + \mathbb{E}[\|\Tilde{X}_{i}\|^{2}])
\]
Without loss of generality assume $\Tilde{x}_{-1} = \mathbf{0}$. Therefore, $||\Tilde{X}_{0}||^2 = ||\Tilde{x}_{0}||^2 = \Lambda$. As before, for a matrix $M$, let $\rho(M) = \max_{i} |\lambda_{i}(M)|$ denote the spectral radius of $M$.
% \begin{equation}
% \label{OTSM_rho}
%     \mathbb{E}[||\Tilde{X}_{n}||^2] \leq C_{\delta'}^2(\rho(P)+\delta')^{2n} \Lambda + \alpha^2C_{\delta'}^2K\sum_{i=0}^{n}(\rho(P)+\delta')^{2i}
% \end{equation}
Next, we compute $\rho(P)$. Consider the characteristic equation of P:
\[
det\left(
\begin{bmatrix}
    I - \alpha A + \eta I -\mu I & -\eta I   \\
    I & -\mu I      
\end{bmatrix} \right) = 0
\]
When $A_{21}$ and $A_{22}$ commute, we have the following formula for determinant of a block matrix (\citet{horn}):
\[
det\left(
\begin{bmatrix}
A_{11} & A_{12}\\
A_{21} & A_{22}
\end{bmatrix}\right) = det\left(A_{11}A_{22} - A_{12}A_{21}\right)
\]
Using this, the characteristic equation of $P$ simplifies to:
\begin{gather*}
    det(- \mu I + \alpha\mu A - \eta \mu I + \mu^2I + \eta I)  = 0
\end{gather*}
We note that when $\mu=0$, the LHS of the above equation becomes $\det(\eta I)$. Thus, $\mu=0$ can never be a solution of the characteristic equation of $P$ whenever $\eta \neq 0$. We now further simplify the characteristic equation of $P$ to a more convienient form:
$$det\left(A - I \left(\frac{\mu+\eta\mu-\mu^2-\eta}{\alpha\mu}\right)\right)=0$$
The only zeros of the characteristic equation of a matrix are its eigenvalues. Let $\lambda_{i}(A)$ be the eigen-value of $A$ with \(\lambda_{i}(A) = \frac{\mu + \eta \mu - \mu^2 - \eta}{\alpha\mu}\) so that
\begin{gather*}
    %\lambda_{i}(A) = \frac{\mu + \eta \mu - \mu^2 - \eta}{\alpha\mu} 
    \mu^2 + \mu(\alpha\lambda_{i}(A)-1-\eta) + \eta = 0
\end{gather*}
The above is a quadratic equation in $\mu$ and the solution is given by
\begin{gather*}
    \mu = \frac{-(\lambda_{i}(A)\alpha - 1 - \eta) \pm \sqrt{(\lambda_{i}(A)\alpha - 1 - \eta)^2 - 4\eta}}{2}
\end{gather*}
%\[\lambda_{i}(A) = \frac{\mu + \eta \mu - \mu^2 - \eta}{\alpha\mu}\]
%\[\mu^2-\mu-\eta\mu+\eta+\alpha\mu\lambda_{i}(A) = 0\]
%\[\mu^2 + \mu(\alpha\lambda_{i}(A)-1-\eta) + \eta = 0\]
%\[\mu = \frac{-(\lambda_{i}(A)\alpha - 1 - \eta) \pm \sqrt{(\lambda_{i}(A)\alpha - 1 - \eta)^2 - 4\eta}}{2}\]
When $(\lambda_{i}(A)\alpha - 1 - \eta)^2 - 4\eta \leq 0$, the absolute value of eigenvalues of P are independent of $\alpha$ and 
\[|\mu| = \frac{1}{2}\left(\sqrt{(\lambda_{i}(A)\alpha - 1 - \eta)^2 + |(\lambda_{i}(A)\alpha - 1 - \eta)^2 - 4\eta|} \right)= \sqrt{\eta}\]
To ensure that $(\lambda_{i}(A)\alpha - 1 - \eta)^2 - 4\eta \leq 0$, we must have
\[(\alpha\lambda_{i}(A) + 1) - 2\sqrt{\lambda_{i}(A)\alpha} \leq \eta \leq (\alpha\lambda_{i}(A) +1) + 2\sqrt{\lambda_{i}(A)\alpha}\]
\[\left(1-\sqrt{\lambda_{i}(A)\alpha}\right)^2 \leq \eta \leq \left(1+\sqrt{\lambda_{i}(A)\alpha}\right)^2\]
For the spectral radius of $P$ to be $\sqrt{\eta}$, the above must hold for all $i$. We choose $\alpha$ as: 
\[\alpha \leq \left(\frac{2}{\sqrt{\lambda_{min}(A)} + \sqrt{\lambda_{max}(A)}}\right)^{2}\] 
and $\eta$ as:
\[(1-\sqrt{\lambda_{\min}(A)\alpha})^2\leq \eta \leq(1+\sqrt{\lambda_{\min}(A)\alpha})^2\]
Observe that if we choose the momentum parameter $\eta = \left(1-\sqrt{\lambda_{\min}(A)\alpha}\right)^2$, then $P$ has two repeated roots since $\sqrt{(\lambda_{i}(A)\alpha - 1 - \eta)^2 - 4\eta} = 0$. To ensure that $P$ does not have any repeated root we choose the momentum parameter $\eta = \left(1-\frac{\sqrt{\lambda_{\min}(A)\alpha}}{2}\right)^2$. Therefore, $\rho(P) = \left(1-\frac{\sqrt{\lambda_{\min}(A)\alpha}}{2}\right)$.
Using \textbf{Assumption \ref{A1}} and Lemma \ref{norm_upper_bound} we get
\[\mathbb{E}[\|\Tilde{X}_{n}\|^2] \leq \hat{C}^2\left(1-\frac{\sqrt{\lambda_{\min}(A)\alpha}}{2}\right)^{2n} \Lambda + \alpha^2\hat{C}^2K\sum_{i=0}^{n-1}\left(1-\frac{\sqrt{\lambda_{\min}(A)\alpha}}{2}\right)^{2(n-1-i)}(1 + \mathbb{E}[\|\Tilde{X}_{i}\|^{2}])\]
% \begin{equation}
%     \label{C_thm2}
%     \mbox{where, \quad} \hat{C} = \frac{\sqrt{d}}{\sigma_{min}(\hat{S})\sigma_{min}(\hat{S}^{-1})},
% \end{equation}
% $\hat{S}$ is the matrix in the Jordan decomposition of $P$ and $\sigma_{min}(\hat{S})$ is the smallest singular value of $\hat{S}$.
However, unlike in Case-1, here the constant $\hat{C}$ is not independent of $\alpha$ and $\lambda_{min}(A)$. The following lemma specifies an upper bound on $\hat{C}$.

\begin{lemma}
\label{c_hat_lemma}
\(\hat{C} \leq C\frac{5}{\sqrt{\alpha\lambda_{\min}(A)}}\), where $C$ is as defined in \eqref{C_thm1}.
\end{lemma}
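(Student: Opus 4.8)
The plan is to identify $\hat{C}$ with the condition number of the matrix that diagonalizes $P$ and to control it by block-diagonalizing $P$ into the $2\times 2$ companion blocks attached to the eigenvalues of $A$. Concretely, writing $A=SDS^{-1}$ with $D$ diagonal (Assumption~\ref{A1}), conjugation of $P$ by $Z=E\left(\begin{smallmatrix} S & 0\\ 0 & S\end{smallmatrix}\right)$, with $E$ the same permutation used in the lower-bound proof, gives $ZPZ^{-1}=\mathrm{blockdiag}(B_1,\dots,B_d)$ with $B_i=\left(\begin{smallmatrix} 1+\eta-\alpha\lambda_i(A) & -\eta\\ 1 & 0\end{smallmatrix}\right)$. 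If $V_i$ diagonalizes $B_i$, then $V=Z^{-1}\,\mathrm{blockdiag}(V_1,\dots,V_d)$ diagonalizes $P$, and since $P$ has distinct eigenvalues Lemma~\ref{norm_upper_bound} yields $\|P^{n}\|\le\kappa(V)\,\rho(P)^n$, so $\hat{C}=\kappa(V)\le\kappa(Z)\,\big(\max_i\|V_i\|\big)\big(\max_i\|V_i^{-1}\|\big)$. As $E$ is orthogonal, $\kappa(Z)=\kappa(S)$, and unwinding \eqref{C_thm1} gives $\kappa(S)=C/\sqrt{d}\le C$; it therefore remains to bound the two block quantities.

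Next I would bound each $2\times2$ block. Under the chosen $\eta=(1-\tfrac12\sqrt{\alpha\lambda_{\min}(A)})^2$ and the step-size cap, every $B_i$ has complex-conjugate eigenvalues $\mu_\pm^{(i)}$ of modulus $\sqrt{\eta}=\rho(P)$ and eigenvectors $(\mu_\pm^{(i)},1)^{T}$, so $V_i=\left(\begin{smallmatrix}\mu_+^{(i)} & \mu_-^{(i)}\\ 1 & 1\end{smallmatrix}\right)$. Computing $V_i^{*}V_i$, the singular values satisfy $\sigma_{\max}(V_i)^2=(1+\eta)+\sqrt{(1-\eta)^2+\mathrm{tr}(B_i)^2}\le 2(1+\eta)\le 4$ (using $\mathrm{tr}(B_i)^2\le 4\eta$ in the complex regime) and $\sigma_{\max}(V_i)\sigma_{\min}(V_i)=|\det V_i|=|\mu_+^{(i)}-\mu_-^{(i)}|$. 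Hence $\|V_i\|\le 2$ and $\|V_i^{-1}\|=\sigma_{\min}(V_i)^{-1}\le 2/g_i$, where $g_i:=|\mu_+^{(i)}-\mu_-^{(i)}|=\sqrt{4\eta-\mathrm{tr}(B_i)^2}$ is the eigenvalue gap. Everything thus reduces to a uniform lower bound on $g_i$.

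The heart of the argument is this gap bound. Setting $r=\sqrt{\alpha\lambda_{\min}(A)}$, one computes $g_i^2=(\alpha\lambda_i(A)-\tfrac{r^2}{4})(4-2r+\tfrac{r^2}{4}-\alpha\lambda_i(A))$, a downward parabola in $\alpha\lambda_i(A)$, so its minimum over the admissible range $\alpha\lambda_i(A)\in[r^2,\alpha\lambda_{\max}(A)]$ is attained at an endpoint. I would evaluate it at $\alpha\lambda_i=r^2$, giving $\tfrac{3}{4}r^2\big(4-2r-\tfrac34 r^2\big)$, and at the right endpoint, invoking $\alpha\le\big(2/(\sqrt{\lambda_{\min}(A)}+\sqrt{\lambda_{\max}(A)})\big)^2$ in the equivalent form $\sqrt{\alpha\lambda_{\max}(A)}\le 2-r$, to show both endpoint values exceed $\tfrac{15}{16}r^2$ for $\alpha$ small. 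This gives $g_i\ge\tfrac{\sqrt{15}}{4}\sqrt{\alpha\lambda_{\min}(A)}$ uniformly in $i$, whence $\big(\max_i\|V_i\|\big)\big(\max_i\|V_i^{-1}\|\big)\le 2\cdot\tfrac{2}{(\sqrt{15}/4)\sqrt{\alpha\lambda_{\min}(A)}}=\tfrac{16}{\sqrt{15}\,\sqrt{\alpha\lambda_{\min}(A)}}\le\tfrac{5}{\sqrt{\alpha\lambda_{\min}(A)}}$. Combining with $\kappa(Z)=\kappa(S)\le C$ gives $\hat{C}\le\tfrac{5C}{\sqrt{\alpha\lambda_{\min}(A)}}$.

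I expect the main obstacle to be precisely the uniform gap bound: the factor $\sigma_{\min}(V_i)^{-1}$ blows up as $B_i$ approaches a double root, so I must rule out the gap collapsing at \emph{both} ends of the spectrum of $A$. The smallest eigenvalue is controlled by the explicit choice of $\eta$ (which keeps $B_{\min}$ a genuine, non-defective complex pair), while the largest eigenvalue is exactly where the step-size cap is indispensable, forcing $\sqrt{\alpha\lambda_{\max}(A)}\le 2-r$ and thereby keeping every $B_i$ strictly inside the complex-eigenvalue regime with gap of order $\sqrt{\alpha\lambda_{\min}(A)}$; verifying that the parabola's endpoint minimum is the correct one, with clean constants, is the delicate part.
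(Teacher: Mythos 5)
Your proposal follows essentially the same route as the paper's proof: block-diagonalize $P$ via $E\left(\begin{smallmatrix} S & 0\\ 0 & S\end{smallmatrix}\right)$, diagonalize each companion block with $V_i=\left(\begin{smallmatrix}\mu_+ & \mu_-\\ 1 & 1\end{smallmatrix}\right)$, bound $\|V_i\|\le 2$ and $\|V_i^{-1}\|\le 2/|\mu_+-\mu_-|$, and lower-bound the eigenvalue gap by $\tfrac{15}{16}\alpha\lambda_{\min}(A)$ to land on $16/\sqrt{15}\le 5$. Your treatment is in fact slightly tighter in two spots (computing $\sigma_{\max}(V_i)$ exactly from $V_i^{*}V_i$ and the determinant identity rather than the Frobenius bound, and checking both endpoints of the concave parabola $g_i^2$ in $\alpha\lambda_i(A)$ using the step-size cap, where the paper simply asserts the minimum occurs at $\lambda_{\min}(A)$), but these are refinements of the same argument, not a different one.
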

\begin{proof}
    See Appendix \ref{C2}
\end{proof}
We define the sequence $\{V_{n}\}$ as follows
\[V_{n} = \hat{C}^2\left(1-\frac{\sqrt{\lambda_{\min}(A)\alpha}}{2}\right)^{2n} \Lambda + \alpha^2\hat{C}^2K\sum_{i=0}^{n-1}\left(1-\frac{\sqrt{\lambda_{\min}(A)\alpha}}{2}\right)^{2(n-1-i)}(1 + V_{i})\]
Observe that $\mathbb{E}[\|\Tilde{X}_{n}\|^2] \leq V_{n}$, and that $\{V_{k}\}$ satisfies 
\begin{gather*}
    V_{k+1} = \left(1-\frac{\sqrt{\lambda_{\min}(A)\alpha}}{2}\right)^{2}V_{k} + \hat{C}^{2}K\alpha^{2}(1 + V_{k}); \quad
    V_{0} = \hat{C}^{2}\Lambda
\end{gather*}
Therefore, we have
\[V_{k+1} = \left(\left(1-\frac{\sqrt{\lambda_{\min}(A)\alpha}}{2}\right)^{2} + \hat{C}^{2}K\alpha^{2}\right)V_{k} + \hat{C}^{2}K\alpha^{2}\]
To ensure that $\left(\left(1-\frac{\sqrt{\lambda_{\min}(A)\alpha}}{2}\right)^{2} + \hat{C}^{2}K\alpha^{2}\right) \leq \left(1-\frac{\sqrt{\lambda_{\min}(A)\alpha}}{4}\right)^{2}$ we need to choose $\alpha$ such that
\[1 + \frac{\lambda_{\min}(A)\alpha}{4} - \sqrt{\lambda_{\min}(A)\alpha} + \hat{C}^{2}K\alpha^{2} \leq 1 + \frac{\lambda_{\min}(A)\alpha}{16} - \frac{\sqrt{\lambda_{\min}(A)\alpha}}{2} \]
\[\mbox{or \quad } \frac{3\sqrt{\alpha}\lambda_{\min}(A)}{16} + \hat{C}^2K\alpha^{\frac{3}{2}} \leq \frac{\sqrt{\lambda_{\min}(A)}}{2}\]
Next, using Lemma \ref{c_hat_lemma}, the above can be ensured by choosing $\alpha$ such that
\[\frac{3\sqrt{\alpha}\lambda_{\min}(A)}{16} + C^2\frac{25}{\alpha\lambda_{\min}}K\alpha^{\frac{3}{2}} \leq \frac{\sqrt{\lambda_{\min}(A)}}{2}\]
\[\mbox{ or } \alpha \leq \left(\frac{\sqrt{\lambda_{\min}(A)}}{\frac{3}{8}\lambda_{\min}(A) + \frac{25CK}{\lambda_{\min}(A)}}\right)^{2}\]

The recursion for the sequence $\{V_{k+1}\}$ then follows
\begin{equation*}
    \begin{split}
        V_{k+1} & \leq \left(1-\frac{\sqrt{\lambda_{\min}(A)\alpha}}{4}\right)^{2}V_{k} + \hat{C}^{2}K\alpha^{2}
    \end{split}
\end{equation*}
Unrolling the recursion, we get
\begin{equation*}
    \begin{split}
         V_{n} & \leq \left(1-\frac{\sqrt{\lambda_{\min}(A)\alpha}}{4}\right)^{2n}V_{0} + \hat{C}^{2}K\alpha^{2} \sum_{i=0}^{n-1} \left(1-\frac{\sqrt{\lambda_{\min}(A)\alpha}}{4}\right)^{2i}\\
         & \leq \left(1-\frac{\sqrt{\lambda_{\min}(A)\alpha}}{4}\right)^{2n}V_{0} + \hat{C}^{2}K\alpha^{2} \frac{1}{1-\left(1-\frac{\sqrt{\lambda_{min}(A)}}{4}\right)^{2}}\\
         & \leq \left(1-\frac{\sqrt{\lambda_{\min}(A)\alpha}}{4}\right)^{2n}V_{0} + \hat{C}^{2}K\alpha^{2} \frac{4}{\sqrt{\alpha\lambda_{min}(A)}}
    \end{split}
\end{equation*}

Further it follows from $\alpha \leq \left(\frac{2}{\sqrt{\lambda_{min}(A)}+\sqrt{\lambda_{max}(A)}}\right)^2$ that $\alpha \leq \frac{1}{\lambda_{\min}(A)}$ and  $\left(1-\frac{\sqrt{\lambda_{\min}(A)\alpha}}{4}\right)^{2} \leq e^{-\frac{\sqrt{\lambda_{\min}(A)\alpha}}{2}}$.
\[V_{n} \leq  e^{-n\frac{\sqrt{\lambda_{\min}(A)\alpha}}{2}}\hat{C}^{2}\Lambda + \frac{4\alpha^{2}\hat{C}^{2}K}{\sqrt{\alpha\lambda_{min}(A)}}
\]
Again using Lemma \ref{c_hat_lemma},
\[V_{n} \leq  e^{-n\frac{\sqrt{\lambda_{\min}(A)\alpha}}{2}}\frac{25 C^{2}}{\lambda_{\min}(A)\alpha}\Lambda + \alpha^{2}\frac{100 C^{2}K}{\left(\lambda_{\min}(A)\alpha\right)^{3/2}}
\]
Observe that
\[\frac{e^{-n\frac{\sqrt{\lambda_{\min}(A)\alpha}}{2}}}{\lambda_{\min}(A)\alpha} \leq e^{-n\frac{\sqrt{\lambda_{\min}(A)\alpha}}{4}}\]
\[\mbox{ for } n \geq \frac{4}{\sqrt{\alpha\lambda_{\min}(A)}}\log\left(\frac{1}{\lambda_{\min}(A)\alpha}\right).\]
Let $n$ be as above. Then,
\[V_{n} \leq 25 C^{2}\Lambda e^{-\frac{n}{4}\sqrt{\lambda_{\min}(A)}\alpha} + \sqrt{\alpha}\frac{100 C^{2}K}{\left(\lambda_{\min}(A)\right)^{3/2}}\]

% This ensures that
% \[25 C^2 \Lambda e^{-n\frac{\sqrt{\lambda_{\min}(A)\alpha}}{4}} = \frac{\epsilon}{2}\]
Choose $\alpha$ as below:
\[\alpha \leq \left(\frac{\epsilon(\lambda_{\min}(A))^{3/2}}{200 C^2 K}\right)^2\]
Then,
\[\sqrt{\alpha}\frac{100 C^{2}K}{\left(\lambda_{\min}(A)\right)^{3/2}} \leq \frac{\epsilon}{2} \Rightarrow \mathbb{E}[\|\Tilde{x}_{n}\|^2] \leq \mathbb{E}[\|\Tilde{X}_{n}\|^2] \leq V_{n} \leq \frac{\epsilon}{2} + \frac{\epsilon}{2} = \epsilon,\]
when $n$ is as follows:
\[n = \frac{4}{\sqrt{\alpha\lambda_{\min}(A)}}\log\left(\frac{50 C^2 \Lambda}{\epsilon}\right)\]

\[n = \max\left(\frac{4}{\sqrt{\alpha\lambda_{\min}(A)}}\log\left(\frac{50 C^2 \Lambda}{\epsilon}\right),\frac{4}{\sqrt{\alpha\lambda_{\min}(A)}}\log\left(\frac{1}{\lambda_{\min}(A)\alpha}\right)\right)\]

\textbf{Case-3: $\beta = 1$ (ASG)}

The proof progresses in a similar way as in Case-2. It is easy to see that the following relation holds with a modified definition of the matrix $P$.
\[
\mathbb{E}[\|\Tilde{X}_{n}\|^2] \leq \|P^n\|^2 \|\Tilde{X}_{0}\|^2 + \alpha^2K\sum_{i=0}^{n-1}\|P^{n-1-i}\|^2(1 + \mathbb{E}[\|\Tilde{X}_{i}\|^{2}]),
\]
where, 
\[ P \triangleq
\begin{bmatrix}
    I - \alpha A + \eta (I-\alpha A) & -\eta (I-\alpha A)   \\
    I & 0      
\end{bmatrix} 
\]
As in the previous case, we compute the eigen-values of $P$. The characteristic equation of $P$ is given by:
\[
det\left(
\begin{bmatrix}
    I - \alpha A + \eta (I-\alpha A) -\mu I & -\eta (I-\alpha A)  \\
    I & -\mu I      
\end{bmatrix} \right) = 0
\]
As in the previous case, this can be simplified to the following equation:
\begin{align*}
    det(- \mu I + \alpha\mu A - \mu\eta(I-\alpha A) + \mu^2I + \eta (I-\alpha A))  = 0
\end{align*}
We now further simplify the characteristic equation of $P$ to a more convienient form:
$$det\left(A - I \left(\frac{\mu+\eta\mu-\mu^2-\eta}{\alpha\mu+\alpha\mu\eta-\eta\alpha}\right)\right)=0$$
Progressing as in the previous case, we get that the eigen-values of $P$ satisfies:
\begin{gather*}
    \mu = \frac{-(\lambda_{i}(A)\alpha(1+\eta) - 1 - \eta) \pm \sqrt{(\lambda_{i}(A)\alpha(1+\eta) - 1 - \eta)^2 - 4\eta(1-\alpha\lambda_{i}(A))}}{2}
\end{gather*}
When $(\lambda_{i}(A)\alpha(1+\eta) - 1 - \eta)^2 - 4\eta(1-\alpha\lambda_{i}(A)) \leq 0$, we have that
\[|\mu| = \frac{1}{2}\left(\sqrt{(\lambda_{i}(A)\alpha(1+\eta) - 1 - \eta)^2 - (\lambda_{i}(A)\alpha(1+\eta) - 1 - \eta)^2 + 4\eta (1-\alpha\lambda_{i}(A))} \right)= \sqrt{\eta(1-\alpha\lambda_{i}(A))}\]
This implies that,
\begin{equation}
    \label{rho(P)ASG}
    \rho(P) = \sqrt{\eta(1-\alpha\lambda_{min}(A))}
\end{equation}
To ensure that $(\lambda_{i}(A)\alpha(1+\eta) - 1 - \eta)^2 - 4\eta(1-\alpha\lambda_{i}(A)) \leq 0$, we must have
\[\eta^2(1-\alpha\lambda_{i}(A))^2 + 2\eta(1-\alpha^2\lambda_{i}(A)^2) + (1-\alpha\lambda_{i}(A))^2 \leq 0\]
We assume $\alpha\leq \frac{1}{\lambda_{max}(A)}$ and therefore, $(1 - \alpha\lambda_{i}(A)) \geq 0$ holds for all $i$. Using this, the above relation simplifies to:
\[\eta^2(1-\alpha\lambda_{i}(A)) + 2\eta(1+\alpha\lambda_{i}(A)) + (1-\alpha\lambda_{i}(A)) \leq 0\]
For the above to hold, we must have that
\[\frac{2(1+\alpha\lambda_{i}(A)) - 4\sqrt{\alpha\lambda_{i}(A))}}{2(1-\alpha\lambda_{i}(A))}\leq \eta \leq \frac{2(1+\alpha\lambda_{i}(A)) + 4\sqrt{\alpha\lambda_{i}(A))}}{2(1-\alpha\lambda_{i}(A))}\]
\[\frac{(1-\sqrt{\alpha\lambda_{i}(A)})^2}{(1-\alpha\lambda_{i}(A))}\leq \eta \leq \frac{(1+\sqrt{\alpha\lambda_{i}(A)})^2}{(1-\alpha\lambda_{i}(A))}\]
The above must hold $\forall i$ and therefore we choose $\eta$ as:
\[\frac{(1-\sqrt{\alpha\lambda_{min}(A)})^2}{(1-\alpha\lambda_{min}(A))}\leq \eta \leq \frac{(1+\sqrt{\alpha\lambda_{min}(A)})^2}{(1-\alpha\lambda_{min}(A))}\]
As in Case-2, if we choose the momentum parameter $\displaystyle\eta = \frac{(1-\sqrt{\lambda_{\min}(A)\alpha})^2}{(1-\alpha\lambda_{min}(A))}$, then $P$ has two repeated roots. To ensure that $P$ does not have any repeated root we choose the momentum parameter $$\displaystyle\eta = \frac{\left(1-\frac{\sqrt{\lambda_{\min}(A)\alpha}}{2}\right)^2}{(1-\alpha\lambda_{min}(A))}.$$
Using \eqref{rho(P)ASG}, we get
$\rho(P) = \left(1-\frac{\sqrt{\lambda_{\min}(A)\alpha}}{2}\right)$ which is same as in Case-2 and therefore we have:
\[\mathbb{E}[\|\Tilde{X}_{n}\|^2] \leq \tilde{C}^2\left(1-\frac{\sqrt{\lambda_{\min}(A)\alpha}}{2}\right)^{2n} \Lambda + \alpha^2\Tilde{C}^2K\sum_{i=0}^{n-1}\left(1-\frac{\sqrt{\lambda_{\min}(A)\alpha}}{2}\right)^{2(n-1-i)}(1 + \mathbb{E}[\|\Tilde{X}_{i}\|^{2}]).\]
The above expression is same as that in Case-2 except the term $\Tilde{C}$. Since, the matrix $P$ is different when $\beta = 1$, $\hat{C}$ in Case-2 need not be equal to $\Tilde{C}$ in Case-3.
However, we next show that $\Tilde{C}$ follows the exact same upper bound as in Case-2, Lemma~\ref{c_hat_lemma}. Towards this, we have the following lemma:
\begin{lemma}
\label{c_tilde_lemma}
\(\Tilde{C} \leq C\frac{5}{\sqrt{\alpha\lambda_{\min}(A)}}\), where $C$ is as defined in \eqref{C_thm1}.
\end{lemma}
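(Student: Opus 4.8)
The plan is to prove Lemma~\ref{c_tilde_lemma} by mirroring the argument for Lemma~\ref{c_hat_lemma}, tracking only the places where the ASG matrix $P$ differs from the SHB one. Recall that $\tilde C$ is precisely the constant produced by Lemma~\ref{norm_upper_bound} when applied to the block matrix $P$ of Case-3. Since the similarity transform $S$ that diagonalizes $A$ simultaneously block-diagonalizes $P$ into the $2\times2$ companion blocks $B_i=\begin{bmatrix}(1+\eta)(1-\alpha\lambda_i) & -\eta(1-\alpha\lambda_i)\\ 1 & 0\end{bmatrix}$, and since the cost of this transform is exactly the factor $C$ from \eqref{C_thm1}, it suffices to bound $\max_i \kappa(V_i)$, where $V_i$ is the eigenvector matrix of $B_i$ and $\kappa(\cdot)=\sigma_{\max}(\cdot)\,\sigma_{\max}((\cdot)^{-1})$. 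Thus the whole lemma reduces to a per-block $2\times2$ estimate, exactly as in Lemma~\ref{c_hat_lemma}.

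First I would write $V_i$ explicitly. Because $B_i$ is in companion form with distinct eigenvalues $\mu_+,\mu_-$ (guaranteed by the choice of $\eta$, which keeps the discriminant strictly negative), its eigenvectors are $(\mu_\pm,1)^\top$, so $V_i=\begin{bmatrix}\mu_+ & \mu_-\\ 1 & 1\end{bmatrix}$ with $\det V_i=\mu_+-\mu_-$. Using $|\mu_\pm|\le\rho(P)=1-\tfrac{\sqrt{\alpha\lambda_{\min}(A)}}{2}\le 1$ from \eqref{rho(P)ASG}, I get $\|V_i\|_F\le 2$, and from $V_i^{-1}=\tfrac{1}{\mu_+-\mu_-}\begin{bmatrix}1 & -\mu_-\\ -1 & \mu_+\end{bmatrix}$ I get $\|V_i^{-1}\|_F\le \tfrac{2}{|\mu_+-\mu_-|}$. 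Hence $\kappa(V_i)\le \tfrac{4}{|\mu_+-\mu_-|}$, and everything collapses to a uniform lower bound on the spectral gap $|\mu_+-\mu_-|$.

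The gap is where the ASG-specific algebra enters. From $B_i$ I read off $\mu_++\mu_-=(1+\eta)(1-\alpha\lambda_i)$ and $\mu_+\mu_-=\eta(1-\alpha\lambda_i)$, so, writing $r_i:=1-\alpha\lambda_i$, the conjugate pair satisfies $|\mu_+-\mu_-|^2 = 4\mu_+\mu_--(\mu_++\mu_-)^2 = r_i\big(4\eta-(1+\eta)^2 r_i\big)=:f(r_i)$. Substituting the chosen $\eta=\frac{(1-\sqrt{\alpha\lambda_{\min}(A)}/2)^2}{1-\alpha\lambda_{\min}(A)}$ and evaluating at $\lambda_i=\lambda_{\min}(A)$ (so $r_i=1-\alpha\lambda_{\min}(A)$) gives, after simplification, $f=3\alpha\lambda_{\min}(A)-O\big((\alpha\lambda_{\min}(A))^{3/2}\big)$, i.e. $|\mu_+-\mu_-|\gtrsim\sqrt{\alpha\lambda_{\min}(A)}$; tracking the constants then yields $\kappa(V_i)\le \tfrac{5}{\sqrt{\alpha\lambda_{\min}(A)}}$, and combining with the factor $C$ gives the claim.

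The hard part will be showing that $\lambda_{\min}(A)$ is genuinely the worst block, i.e. that $\min_i f(r_i)$ is attained at $r_i=1-\alpha\lambda_{\min}(A)$. In the SHB case this issue is absent because $\mu_+\mu_-=\eta$ is constant across blocks; for ASG the extra factor $r_i=1-\alpha\lambda_i$ both shrinks the eigenvalue magnitudes and can shrink the gap as $\lambda_i\to\lambda_{\max}(A)$, and indeed $f(r_i)\to 0$ as $r_i\to 0$. Since $f$ is a downward parabola in $r$ with peak at $r^\star=\tfrac{2\eta}{(1+\eta)^2}\approx\tfrac12$, I would argue that the admissible range $r_i\in[1-\alpha\lambda_{\max}(A),\,1-\alpha\lambda_{\min}(A)]$ lies to the right of $r^\star$, where $f$ is decreasing, so its minimum over $i$ sits at the largest $r_i$, namely the $\lambda_{\min}(A)$ block. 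This is exactly where the step-size restriction $\alpha\le 1/\lambda_{\max}(A)$ (and, for the small-$\epsilon$ regime of Table~\ref{table}, a much smaller $\alpha$) must be invoked to keep $1-\alpha\lambda_{\max}(A)$ above $r^\star$; handling the borderline where $\alpha\lambda_{\max}(A)$ approaches $1$ is the one genuinely new estimate relative to Lemma~\ref{c_hat_lemma}.
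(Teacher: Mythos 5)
Your proposal follows essentially the same route as the paper: block-diagonalize $P$ via $S$ (paying the factor $C$), reduce to the $2\times 2$ companion blocks, take the eigenvector matrix $V_i=\bigl[\begin{smallmatrix}\mu_+ & \mu_-\\ 1 & 1\end{smallmatrix}\bigr]$, bound $\sigma_{\max}(V_i)\le 2$ and $\sigma_{\max}(V_i^{-1})\le 2/|\mu_+-\mu_-|$, and then lower-bound the discriminant $|\Delta_i|=|\mu_+-\mu_-|^2=4\eta r_i-(1+\eta)^2r_i^2$ with $r_i=1-\alpha\lambda_i(A)$; your leading-order value $3\alpha\lambda_{\min}(A)$ at the bottom block matches the paper's $\tfrac{15}{16}\alpha\lambda_{\min}(A)$. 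The one place you diverge is the step you flag as the ``hard part'': uniformity of the discriminant bound over all blocks. The paper does not address this at all --- it substitutes $\eta=\eta'/(1-\alpha\lambda_i(A))$ with $\eta'=(1-\sqrt{\alpha\lambda_{\min}(A)}/2)^2$, which silently makes the momentum parameter block-dependent; this identity is consistent with the actual fixed $\eta$ only for the block with $\lambda_i=\lambda_{\min}(A)$, and it is precisely what collapses $|\Delta_i|$ to the SHB expression \eqref{delta_i} for every $i$. Your concern is therefore genuine rather than pedantic: since $f(r)=r(4\eta-(1+\eta)^2r)$ vanishes as $r\to 0$, the block with $\lambda_i=\lambda_{\max}(A)$ can have an arbitrarily small gap when $\alpha\lambda_{\max}(A)$ approaches $1$ (and at $\alpha=1/\lambda_{\max}(A)$ that block is not even diagonalizable), so some quantitative separation of $\alpha\lambda_{\max}(A)$ from $1$ is needed and is not supplied by Table~\ref{table} alone. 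Your concavity argument (minimum of $f$ over $[1-\alpha\lambda_{\max}(A),\,1-\alpha\lambda_{\min}(A)]$ at an endpoint) is the right way to close this, though requiring the whole range to sit right of $r^\star\approx\tfrac12$ is stronger than necessary --- evaluating $f$ at the left endpoint shows $1-\alpha\lambda_{\max}(A)\gtrsim \lambda_{\min}(A)/\lambda_{\max}(A)$ already suffices. In short: same approach and correct in substance, with your sketch being more careful than the paper's own proof at the single step where they differ; to be complete you would still need to carry out the endpoint estimate you describe.
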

\begin{proof}
    See Appendix \ref{C3}
\end{proof}
Thereafter, we can proceed exactly as in case-2 to prove the theorem.
\section{Proof of Auxilary Lemmas}
\subsection{Proof of Lemma \ref{norm_upper_bound}}
\label{Lemma5_proof}
As in \citet{foucart}, we first construct a matrix norm $\vertiii{\cdot}$ such that $\vertiii{M} = \rho(M) + \delta$. Consider the Jordan canonical form of $M$
\begin{center}
    \[M = S\begin{bmatrix}
        J_{n_{1}}(\lambda_{1}(M)) & 0 &\ldots & 0\\
        0 & J_{n_{2}}(\lambda_{2}(M)) & \ddots & \vdots\\
        \vdots & \ddots & \ddots & 0 \\
        0 & \ldots & 0 & J_{n_{k}}(\lambda_{k}(M))
    \end{bmatrix} S^{-1}\]
\end{center}
We define 
\begin{center}
    \[D(\delta) =
    \begin{bmatrix}
        D_{n_{1}}(\delta) & 0 &\ldots & 0\\
        0 & D_{n_{2}}(\delta) & \ddots & \vdots\\
        \vdots & \ddots & \ddots & 0 \\
        0 & \ldots & 0 & D_{n_{k}}(\delta)
    \end{bmatrix},\] where \[D_{j}(\delta) = 
    \begin{bmatrix}
        \delta & 0 & \ldots & 0\\
        0 & \delta^{2} & \ddots & \vdots\\
        \vdots & \ddots & \ddots & 0 \\
        0 & \ldots & 0 & \delta^{j}
    \end{bmatrix}\]
\end{center}
Therefore,
\begin{center}
    \[D(\frac{1}{\delta})S^{-1}MSD(\delta) = 
    \begin{bmatrix}
        B_{n_{1}}(\lambda_{1}(M),\delta) & 0 &\ldots & 0\\
        0 & B_{n_{2}}(\lambda_{2}(M),\delta) & \ddots & \vdots\\
        \vdots & \ddots & \ddots & 0 \\
        0 & \ldots & 0 & B_{n_{k}}(\lambda_{k}(M),\delta)
    \end{bmatrix}\]
\end{center}
where,
\begin{center}
    \[B_{i}(\lambda,\delta) = D_{i}(\frac{1}{\delta})J_{i}(\lambda)D_{i}(\delta) = 
    \begin{bmatrix}
        \lambda & \delta & 0 &\ldots & 0\\
        0 & \lambda & \delta & \ddots & \vdots\\
        0 & \ddots & \ddots & \ddots & 0 \\
        \vdots & \ddots & \ddots & \lambda & \delta \\
        0 & \ldots & 0 & 0 & \lambda
    \end{bmatrix}\]
\end{center}
We define the matrix norm $\vertiii{\cdot}$ as
\[\vertiii{M} \triangleq \|D(\frac{1}{\delta})S^{-1}MSD(\delta)\|_{1}\]
where $\|\cdot\|_{1}$ is the matrix norm induced by the vector $L_{1}$-norm. Using the fact that $\|M\|_{1} = \max_{j\in[1:d]}\sum_{i=1}^{d}|m_{i,j}|$, where $m_{i,j}$ is the $i,j$-th entry of $M$, we have
\[\vertiii{M} = \max_{j\in [1:d]}(|\lambda_{j}| + \delta) = \rho(M) + \delta.\]
\[\mbox{ and }\vertiii{M^{n}} \leq \vertiii{M}^{n} \leq (\rho(M)+\delta)^{n}\]
It can be easily seen that $\|M\|_{1} \geq \frac{1}{\sqrt{d}}\|M\|_{2}$. Therefore it follows that
\begin{equation*}
    \begin{split}
        \vertiii{M} & = \|D\left(\frac{1}{\delta}\right)S^{-1}MSD(\delta)\|_{1}\\
        & \geq \frac{1}{\sqrt{d}} \|D\left(\frac{1}{\delta}\right)S^{-1}MSD(\delta)\|_{2}\\
        &\geq \frac{1}{\sqrt{d}} \sigma_{\min}\left(D\left(\frac{1}{\delta}\right)\right)\sigma_{\min}(S^{-1})\|M\|_{2}\sigma_{\min}(S)\sigma_{\min}(D(\delta))
    \end{split}
\end{equation*}
where $\sigma_{\min}(\cdot)$ denotes the smallest singular value and we have used the fact that $\|AB\| \geq \sigma_{\min}(A)\|B\|$ repeatedly. For $\delta < 1$, and $r$ defined as the size of largest Jordan block of $M$ \[\sigma_{\min}\left(D\left(\frac{1}{\delta}\right)\right)\sigma_{\min}\left(D\left(\delta\right)\right) = \delta^{r}\frac{1}{\delta} = \delta^{r-1}.\]
We conclude the first half of the lemma by defining $C_{\delta}$ as $\frac{\sqrt{d}}{\delta^{r-1}\sigma_{\min}(S)\sigma_{\min}(S^{-1})}$.

In the case that the eigenvalues are distinct, $r=1$ and $C_\delta$ defined above becomes independent of $\delta.$ Moreover, when all eigen-values are distinct, each Jordan block is $J_{n_i}(\lambda_{i}(M)) = [\lambda_{i}(M)]$ and the second half follows.

\subsection{Proof of Lemma \ref{c_hat_lemma}}
\label{C2}
% Since the eigenvalues of $P$ are distinct, it follows that $\hat{C} = \frac{\sqrt{d}}{\sigma_{\min}(\hat{S}^{-1})\sigma_{\min}(\hat{S})}$, where $\hat{S}$ is the matrix in Jordan decomposition of $P$ (see proof of Lemma \ref{norm_upper_bound}). 
Let $S$ be the matrix in Jordan decomposition of $A$, i.e., $SAS^{-1} = D$, where $D$ is a diagonal matrix with eigenvalues of $A$ as its diagonal elements. Then,

\[
\begin{pmatrix}
    S & 0 \\
    0 & S
\end{pmatrix} P
\begin{pmatrix}
    S^{-1} & 0 \\
    0 & S^{-1}
\end{pmatrix} = 
\begin{pmatrix}
    I - \alpha S A S^{-1}+\eta I & -\eta SS^{-1}\\
    SS^{-1} & 0
\end{pmatrix} = 
\begin{pmatrix}
    I - \alpha D +\eta I & -\eta I\\
    I & 0
\end{pmatrix}, 
\]
where $0_d$ is the zero matrix of dimension $d \times d$. For ease of exposition, suppose $A$ is a $2 \times 2$ matrix with eigenvalues $\lambda_1$ and $\lambda_{2}$. Then 
\[
\begin{pmatrix}
    S & 0 \\
    0 & S
\end{pmatrix} P
\begin{pmatrix}
    S^{-1} & 0 \\
    0 & S^{-1}
\end{pmatrix} = 
\begin{pmatrix}
    1 + \eta - \alpha \lambda_{1} & 0 & -\eta & 0\\
    0 & 1 + \eta - \alpha \lambda_{2} & 0 & -\eta\\
    1 & 0 & 0 & 0\\
    0 & 1 & 0 & 0
\end{pmatrix}
\]
Suppose $E$ is the elementary matrix associated with the exchange of row-2 and row-3. %and $E'$ is the elementary matrix associated with the exchange of column-2 and column-3. 
It is easy to see that $E=E^T=E^{-1}$ and that,
\[
E
\begin{pmatrix}
    S & 0 \\
    0 & S
\end{pmatrix} P
\begin{pmatrix}
    S^{-1} & 0 \\
    0 & S^{-1}
\end{pmatrix}E^{-1} = 
\begin{pmatrix}
    1 + \eta - \alpha \lambda_{1} & -\eta & 0 & 0\\
    1 & 0 & 0 & 0\\
    0 & 0 & 1 + \eta - \alpha \lambda_{2} & -\eta\\
    0 & 0 & 1 & 0
\end{pmatrix}= 
\begin{pmatrix}
    B_{1} & 0 \\
    0 & B_{2}
\end{pmatrix}
\]
where, 
\[B_{i} = 
\begin{pmatrix}
    1 + \eta - \alpha \lambda_{i} & -\eta\\
    1 & 0
\end{pmatrix}
\]
Suppose, \(X_{i} = \begin{pmatrix}
    x_{i,1} & x_{i,2}\\
    x_{i,3} & x_{i,4}
\end{pmatrix}\) and,
\[
X_{i}^{-1}B_{i}X_{i} = 
\begin{pmatrix}
    \mu_{i,+} & 0\\
    0 & \mu_{i,-}
\end{pmatrix}.
\]
Here $\mu_{i,+} = \frac{(1 - \alpha\lambda_{i} + \eta) + \sqrt{\Delta_{i}}}{2}$ and $\mu_{i,-} =  \frac{(1 - \alpha\lambda_{i} + \eta) - \sqrt{\Delta_{i}}}{2}$, where $\Delta_i = (1+\eta - \alpha\lambda_{i})^2 - 4\eta$. Solving the above equation we get, 
\[
X_{i} = \begin{pmatrix}
    x_{i,3}\mu_{i,+} & x_{i,4}\mu_{i,-}\\
    x_{i,3} & x_{i,4}
\end{pmatrix}
\]
Setting $x_{i,3} = x_{i,4} = 1$,
\[
X_{i} = \begin{pmatrix}
    \mu_{i,+} & \mu_{i,-}\\
    1 & 1
\end{pmatrix} \mbox{ and }
X_{i}^{-1} = 
\frac{1}{\mu_{i,+} - \mu_{i,-}}
\begin{pmatrix}
    1 & -1\\
    -\mu_{i,-} & \mu_{i,+}\\
\end{pmatrix} 
\]
For a general $d\times d$ matrix $A$, using a similar procedure, it can be shown that 
\[
\begin{pmatrix}
    X_{1} & 0 & 0\\
    0 & \ddots & \vdots\\
    0 & \cdots &X_{d}\\
\end{pmatrix}
E_{d\times d}
\begin{pmatrix}
    S & 0 \\
    0 & S
\end{pmatrix} P
\begin{pmatrix}
    S^{-1} & 0 \\
    0 & S^{-1}
\end{pmatrix}E_{d\times d}^{-1} 
\begin{pmatrix}
    X_{1}^{-1} & 0 & 0\\
    0 & \ddots & \vdots\\
    0 & \cdots &X_{d}^{-1}\\
\end{pmatrix} 
\]
\[= 
\begin{pmatrix}
    \mu_{1,+} &0 &0 &\cdots& 0\\
    0 &\mu_{1,-}& 0& \cdots& 0\\
    \vdots& 0 &\ddots& \cdots& 0\\
    0& \cdots& 0& \mu_{d,+}& 0\\
    0& \cdots& 0& 0&\mu_{d,-}
\end{pmatrix}
\]
where $E_{d\times d}$ and $E_{d\times d}^{-1}$ are permutation matrices that transform the matrix between them to a block diagonal matrix.
Let \(\hat{S} = \begin{pmatrix}
    X_{1} & 0 & 0\\
    0 & \ddots & \vdots\\
    0 & \cdots &X_{d}\\
\end{pmatrix}
E_{d\times d}
\begin{pmatrix}
    S & 0 \\
    0 & S
\end{pmatrix}\).
Therefore, 
\[
\hat{C} = \frac{\sqrt{d}}{\sigma_{\min}(\hat{S}) \sigma_{\min}(\hat{S}^{-1})}
\]
In order to simplify the expression for $\hat C$, we require the following lemma:
\begin{lemma}
\label{singular-val}
For all invertible matrices $M$ of order $d \times d$, the following identity holds:
 \[\frac{1}{\sigma_{d}(M)\sigma_{d}(M^{-1})} = \sigma_{1}(M)\sigma_{1}(M^{-1}),\]
 where $\sigma_1(X) \geq \cdots \geq \sigma_d(X)$ denote the singular values of $X$.
\end{lemma}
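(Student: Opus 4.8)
The plan is to reduce everything to the singular value decomposition (SVD) of $M$ together with the standard fact that the singular values of $M^{-1}$ are the reciprocals of those of $M$, taken in reverse order. First I would write the SVD $M = U\Sigma V^{T}$, where $U,V$ are orthogonal and $\Sigma = \mathrm{diag}(\sigma_1(M),\dots,\sigma_d(M))$ with $\sigma_1(M)\ge\cdots\ge\sigma_d(M)>0$; positivity of every singular value is exactly where invertibility of $M$ is used, and it guarantees that all the reciprocals below are well-defined. From this, $M^{-1}=V\Sigma^{-1}U^{T}$, which is an SVD of $M^{-1}$ once the diagonal entries of $\Sigma^{-1}$ are reordered to be decreasing.

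Since the diagonal entries of $\Sigma^{-1}$ are $1/\sigma_1(M)\le\cdots\le 1/\sigma_d(M)$, sorting them in decreasing order yields the reciprocal-reversal relation $\sigma_i(M^{-1})=1/\sigma_{d+1-i}(M)$ for each $i$. The only two instances I actually need are the boundary cases $\sigma_1(M^{-1})=1/\sigma_d(M)$ and $\sigma_d(M^{-1})=1/\sigma_1(M)$. Substituting the latter into the left-hand side gives
\[
\frac{1}{\sigma_d(M)\,\sigma_d(M^{-1})} = \frac{1}{\sigma_d(M)}\cdot \sigma_1(M) = \frac{\sigma_1(M)}{\sigma_d(M)},
\]
while substituting the former into the right-hand side gives
\[
\sigma_1(M)\,\sigma_1(M^{-1}) = \sigma_1(M)\cdot \frac{1}{\sigma_d(M)} = \frac{\sigma_1(M)}{\sigma_d(M)}.
\]
Both sides therefore equal the condition number $\kappa(M)=\sigma_1(M)/\sigma_d(M)$, which establishes the claimed identity.

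There is no substantive obstacle here: the lemma is essentially a bookkeeping statement about the SVD, and the entire content is the index reversal in the reciprocal relation $\sigma_i(M^{-1})=1/\sigma_{d+1-i}(M)$. The only point that genuinely requires the hypothesis is invertibility, which ensures $\sigma_d(M)>0$ so that $\Sigma^{-1}$ exists and the displayed fractions are meaningful. Consequently the whole argument is a short derivation once the SVD is written down, and I would not expect any calculation beyond the two substitutions above.
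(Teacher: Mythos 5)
Your proof is correct and follows essentially the same route as the paper: both arguments reduce the identity to the reciprocal-reversal relation $\sigma_1(M^{-1})=1/\sigma_d(M)$ and $\sigma_d(M^{-1})=1/\sigma_1(M)$ and then substitute. The only cosmetic difference is that you derive this relation by inverting the SVD directly, whereas the paper derives it from the eigenvalues of $M^TM$ together with a similarity argument relating $M^{-1}(M^{-1})^T$ to $(M^{-1})^TM^{-1}$.
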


\begin{proof}
By definition, $\sigma_1^2(M) \geq \cdots \geq \sigma_d^2(M)$ are the eigenvalues of $M^{T}M$. Then, the eigenvalues of $(M^TM)^{-1}=M^{-1}(M^{-1})^T$ are $\frac{1}{\sigma_d(M)^2} \geq \cdots \geq \frac{1}{\sigma_1(M)^2}.$ Note that $M^{-1}(M^{-1})^T$ and $(M^{-1})^TM^{-1}$ are similar since $(M^{-1})^TM^{-1}=M(M^{-1}(M^{-1})^T)M^{-1}$. Consequently, %$\frac{1}{\sigma_d(M)^2} \geq \cdots \geq \frac{1}{\sigma_1(M)^2}$ 
$M^{-1}(M^{-1})^{T}$ and $(M^{-1})^{T}M^{-1}$ have the same set of eigenvalues and we find that the singular values of $M^{-1}$ are $\frac{1}{\sigma_d(M)} \geq \cdots \geq \frac{1}{\sigma_1(M)}.$

Thus,
$$\sigma_1(M^{-1})=\frac{1}{\sigma_d(M)},$$
$$\sigma_d(M^{-1})=\frac{1}{\sigma_1(M)}$$
and the result follows.
\end{proof}

Using Lemma \ref{singular-val},
% the fact that, \[\frac{1}{\sigma_{min}(\hat{S})\sigma_{min}(\hat{S}^{-1})} = \sigma_{max}(\hat{S})\sigma_{max}(\hat{S}^{-1})\]
we have 
\begin{equation*}
    \begin{split}
        \hat{C} &= \sqrt{d}\sigma_{\max}(\hat{S}) \sigma_{\max}(\hat{S}^{-1})\\
        &\leq \sqrt{d} \max_{i}\{\sigma_{\max}(X_{i})\}\sigma_{\max}(S)\sigma_{\max}(S^{-1})\max_{i}\{\sigma_{\max}(X_{i}^{-1})\}\\
        & = C \max_{i}\{\sigma_{\max}(X_{i})\}\max_{i}\{\sigma_{\max}(X_{i}^{-1})\},
    \end{split}
\end{equation*}
where $C$ is as defined in \eqref{C_thm1}. Now, for any matrix $X$ of order $d \times d$, $$\sigma_{\max}(X)=\|X\|_2\leq \|X\|_F=(\sum_{i,j}|x_{ij}|^2)^{1/2} \leq d\max_{i,j}|x_{ij}|,$$
where $\|\cdot\|_F$ denotes the Frobenius norm. Using the above inequality, $\sigma_{\max}(X_{i}) \leq 2$ and $\sigma_{\max}(X_{i}^{-1}) \leq \frac{2}{|\mu_{i,+}-\mu_{i,-}|}= \frac{2}{|\sqrt{\Delta_{i}}|}$. Next we lower bound $|\sqrt{\Delta_{i}}|$.

For a complex number $z$, observe that $|\sqrt{z}|$ = $\sqrt{|z|}$. Now,
\begin{align}
        \label{delta_i}
        |\Delta_{i}| &= 4\eta - (1+\eta-\alpha\lambda_{i}(A))^2\\
        &\geq  4\eta - (1+\eta-\alpha\lambda_{min}(A))^2 \nonumber
\end{align}
%|1 +\eta - \alpha\lambda_{i}| \geq
Using $\eta = \left(1 - \frac{\sqrt{\alpha\lambda_{\min}}}{2}\right)^{2}$,
\begin{equation*}
    \begin{split}
        |\Delta_{i}| & \geq 4\eta - \left(1+ 1 + \frac{\alpha\lambda_{\min}}{4} - \sqrt{\alpha\lambda_{\min}} -\alpha\lambda_{\min}\right)^{2}\\
        & = 4\eta - (2\sqrt{\eta} - \frac{3\alpha}{4}\lambda_{\min})^{2}\\
        & = 4 \left[ (\sqrt{\eta})^{2} - \left(\sqrt{\eta} - \frac{3\alpha\lambda_{\min}}{8}\right)^{2}  \right]\\
        & = \frac{3\alpha\lambda_{\min}}{2}\left(2\sqrt{\eta} - \frac{3\alpha\lambda_{\min}}{8}\right)\\
        & = \frac{3\alpha\lambda_{\min}}{2} \left(2 - \sqrt{\alpha\lambda_{\min}} - \frac{3\alpha\lambda_{\min}}{8}\right)\\
        & \geq \frac{3\alpha\lambda_{\min}}{2}\left(2 - 1 - \frac{3}{8}\right) \\
        & = \frac{15}{16}\alpha\lambda_{\min}
    \end{split}
\end{equation*}
Using this, we get $\max_{i}\{\sigma_{\max}(X_{i}^{-1})\} \leq 2\sqrt{\frac{16}{15}}\frac{1}{\sqrt{\alpha\lambda_{\min}}}$, and therefore
\(\hat{C} \leq \frac{5C}{\sqrt{\alpha\lambda_{\min}}}\)

\subsection{Proof of Lemma \ref{c_tilde_lemma}}
\label{C3}
The proof of the lemma can proceed exactly as in the proof of Lemma~\ref{C2}. The only difference is that one needs to lower bound $|\Delta_i| = 4\eta(1-\alpha\lambda_{i}(A)) - (\alpha(1+\eta)\lambda_{i}(A)-1-\eta)^2$ for $\eta = \frac{\left(1 - \frac{\sqrt{\alpha\lambda_{\min}}}{2}\right)^{2}}{(1-\alpha\lambda_{i}(A))}.$ We define $\eta' = \left(1 - \frac{\sqrt{\alpha\lambda_{\min}}}{2}\right)^{2}$ and therefore $\eta = \frac{\eta'}{(1-\alpha\lambda_{i}(A))}$. Using this, we get
\begin{align*}
    |\Delta_{i}| &= 4\eta(1-\alpha\lambda_{i}(A)) - (1+\eta-\alpha(1+\eta)\lambda_{i}(A))^2\\
    &= 4\eta' - (1+\eta-\alpha\lambda_{i}(A)(1+\eta))^2\\
    &= 4\eta' - \Big((1+\eta)(1-\alpha\lambda_{i}(A))\Big)^2\\
    &= 4\eta' - \bigg(\Big(1+\frac{\eta'}{1-\alpha\lambda_{i}(A)}\Big)(1-\alpha\lambda_{i}(A))\bigg)^2\\
    &= 4\eta' - (1+\eta'-\alpha\lambda_{i}(A))
\end{align*}
The expression for $|\Delta_{i}|$ is exactly same as in the proof of Lemma~\ref{C2} (cf. \eqref{delta_i}). Thereafter, we proceed as in proof of Lemma~\ref{C2} to prove the claim.

\end{document}